\newcommand*{\loc}{\alpha}
\newcommand*{\locind}{{[\alpha]}}
\theoremstyle{definition}
\newtheorem{rem}{Remark}
\newtheorem{lemma}{Lemma}
\title{Localized Schr\"odinger Bridge Sampler}
\author{Georg A.~Gottwald}
\address{School of Mathematics and Statistics, University of Sydney}
\email{georg.gottwald@sydney.edu.au}
\author{Sebastian Reich}
\address{Department of Mathematics, University of Potsdam}
\email{sebastian.reich@uni-potsdam.de}
\date{\today}
\begin{document}

\begin{abstract} We consider the problem of sampling from an unknown distribution for which only a sufficiently large number of training samples are available. In this paper, we build on previous work combining Schr\"odinger bridges and plug \& play Langevin samplers. A key bottleneck of these approaches is the exponential dependence of the required training samples on the dimension, $d$, of the ambient state space. We propose a localization strategy which exploits conditional independence of conditional expectation values. Localization thus replaces a single high-dimensional Schr\"odinger bridge problem by $d$ low-dimensional Schr\"odinger bridge problems over the available training samples. In this context, a connection to multi-head self attention transformer architectures is established. As for the original Schr\"odinger bridge sampling approach, the localized sampler is stable and geometric ergodic. The sampler also naturally extends to conditional sampling and to Bayesian inference. We demonstrate the performance of our proposed scheme through experiments on a high-dimensional Gaussian problem, on a temporal stochastic process, and on a stochastic subgrid-scale parametrization conditional sampling problem. We also extend the idea of localization to plug \& play Langevin samplers using kernel-based denoising in combination with Tweedie's formula.
\end{abstract}

\maketitle

\small{
\noindent
{\bf Keywords:} generative modeling, Langevin dynamics, denoising, Schr\"odinger bridges, conditional independence, localization, Bayesian inference, conditional sampling, multi-scale closure

\smallskip

\noindent
{\bf AMS:} 60H10,62F15,62F30,65C05,65C40}

%%%%%%%%%%%%%%%%%%%%%%%%%%%%%%%%%%%%%%%%%%%%%%%%%%
%
\section{Introduction}
%
%%%%%%%%%%%%%%%%%%%%%%%%%%%%%%%%%%%%%%%%%%%%%%%%

In this paper, we consider the problem of sampling from an unknown probability measure $\nu({\rm d x})$
on $\mathbb{R}^d$ for which we only have access to a finite set of training samples $x^{(j)} \sim \nu$, $j=1,\ldots,M$. This problem has recently attracted widespread interest in the context of score-generative or diffusion modeling \citep{song2019generative,diffusion1,diffusion2,diffusion3,diffusion4}. If the probability measure $\nu({\rm d}x)$ possesses a probability density function $\pi(x)$, then a popular non-parametric approach to generative modeling is to estimate the score function $s (x;\theta)\approx \nabla \log \pi(x)$ by minimizing an appropriate loss function such as
\begin{equation} \label{eq:loss}
    \mathcal{L}(\theta) = \int_{\mathbb{R}^d}  \|s(x;\theta)-\nabla \log \pi(x)\|^2 \pi(x) {\rm d}x
\end{equation}
in the parameters $\theta \in \mathbb{R}^{d_\theta}$ \citep{hyvarinen2005estimation}. This estimate can then be used in combination with overdamped Langevin dynamics to yield 
\begin{equation} \label{eq:score_based}
\dot{X}(\tau) = s(X(\tau);\theta) + \sqrt{2}\,\dot{W}(\tau),
\end{equation}
where $W(\tau)$ denotes standard $d$-dimensional Brownian motion  \citep{Pavliotis2016}. The stochastic differential equation is typically discretized by the Euler--Maruyama (EM) method to yield an iterative update of the form
\begin{equation} \label{eq:EM_SDE}
X(n+1) = X(n) + \epsilon s(X(n);\theta) + \sqrt{2\epsilon}\, \Xi(n), \qquad \Xi(n) \sim {\rm N}(0,I),
\end{equation}
for $n\ge 0$, where $\epsilon >0$ denotes the step size and $X(n)$ provides the numerical approximation to the solution of (\ref{eq:score_based}) at time $\tau_n = n\,\epsilon$. The EM algorithm is initialized at one of
the training data points; i.e., $X(0) = x^{(j^\ast)}$ with $j^\ast \in \{1,\ldots,M\}$ 
appropriately chosen, and the resulting discrete trajectory $X(n)$, $n\ge 1$, delivers approximate samples from the target distribution $\pi(x)$.

Instead of first estimating the score function from samples and then discretizing~\eqref{eq:score_based} in time, it has been proposed in \cite{GLRY24} to employ Schr\"odinger bridges and to directly estimate the conditional expectation value 
\begin{equation} \label{eq:CE}
\mu (x;\epsilon) := \mathbb{E}[X(\epsilon)|X(0)=x]
\end{equation}
from the given samples $\{x^{(j)}\}_{j = 1}^M$ for given time-step $\epsilon >0$. We denote the Schr\"odinger bridge approximation obtained from the samples by $m(x;\epsilon):\mathbb{R}^d \times \mathbb{R}_+ \to \mathbb{R}^d$ and obtain the iteration scheme
\begin{equation} \label{eq:propagator_general}
X(n+1) = m(X(n);\epsilon) + \sqrt{S(X(n);\epsilon)} \,\Xi(n),
\qquad \Xi(n) \sim {\rm N}(0,I),
\end{equation}
with appropriately defined diffusion matrix $S(x;\epsilon) \in \mathbb{R}^{d\times d}$. Broadly speaking, $m(x;\epsilon)$ controls the drift while $S(x;\epsilon)$ moderates the noise. An obvious choice for $S(x;\epsilon)$ is $S(x;\epsilon) = 2\epsilon I$, which corresponds to the EM discretization (\ref{eq:EM_SDE}). A data-aware $S(x;\epsilon)$ has been introduced in \cite{GLRY24}, which is defined as the Schr\"odinger bridge approximation to the covariance matrix
\begin{equation} \label{eq:Sigma}
\Sigma(x;\epsilon) := \mathbb{E}[X(\epsilon)X(\epsilon)^\top|X(0)=x] - \mu (x;\epsilon)\mu (x;\epsilon)^\top.
\end{equation}
Provided the measure $\nu({\rm d}x)$ possesses a smooth density $\pi(x)$, it holds asymptotically that
\begin{equation}
\Sigma(x;\epsilon) = 2\epsilon I + \mathcal{O}(\epsilon^2).
\end{equation}

\begin{rem}
We emphasize that the discrete-time formulation (\ref{eq:propagator_general}) can be considered even in case the probability measure $\nu({\rm d}x)$ does not possess a probability density function $\pi(x)$ with respect to the Lebesgue measure on $\mathbb{R}^d$; e.g., the measure $\nu$ is concentrated on a submanifold $\mathcal{M}\subset \mathbb{R}^d$, as long as the conditional expectation values (\ref{eq:CE}) and (\ref{eq:Sigma}) can be defined appropriately. The Schr\"odinger bridge approximation allows for such an extension \citep{GLRY24}. Indeed, the, so called, {\it manifold hypothesis} states that many applications of generative modeling lead to measures $\nu({\rm d}x)$ which concentrate on a low-dimensional manifold $\mathcal{M}$ in a high-dimensional ambient space $\mathbb{R}^d$ \cite{FeffermanEtAl16,diffusion2,WhiteleyEtAl24}.
\end{rem}

\noindent
We note that (\ref{eq:propagator_general}) is closely related to the plug \& play unadjusted Langevin sampler (PnP-ULA) of \cite{plugandplay}, where a denoiser $D(x;\epsilon)$ takes the role of $m(x;\epsilon)$ in (\ref{eq:propagator_general}), which should satisfy
\begin{equation} \label{eq: denoiser based score}
\nabla \log \pi(x) \approx \frac{D(x;\epsilon)-x}{\epsilon}
\end{equation}
and $S(x;\epsilon) = 2\epsilon I$. An additional stabilizing term of the form
\begin{equation}
\frac{\epsilon}{\lambda}(P_\mathcal{C}(X(n))-X(n))
\end{equation}
is required  for the associated PnP-ULA scheme to satisfy an appropriate growth condition. Here $\lambda >0$ is a suitable parameter and $P_\mathcal{C}(x)$ projects $x$ onto a compact set $\mathcal{C} \subset \mathbb{R}^d$ which should contain most of the probability mass of $\nu({\rm d}x)$. In particular, if $\nu({\rm d}x)$ is supported on a manifold $\mathcal{M} \subset \mathbb{R}^d$, then $\mathcal{C} \subseteq \mathcal{M}$. See \cite{plugandplay} for more details and \cite{DMP18} for a related approach using a Moreau--Yoshida regularised score function. We note that the Schr\"odinger bridge sampler (\ref{eq:propagator_general}) has been shown to be stable and geometric ergodic \cite{GLRY24} without any additional stabilization term. 

The sampler (\ref{eq:propagator_general}) can be used in the general context of score-generative or diffusion modeling, however, our main motivation is in Bayesian inference and in conditional sampling with applications to multi-scale processes. Applications to Bayesian inference, for which $\nu({\rm d}x)$ takes the role of the prior for given likelihood function 
$\pi(y|x)$, immediately suggest the modified update
\begin{equation} \label{eq:discrete Bayes}
X(n+1) = m(X(n);\epsilon) +\epsilon \nabla \log \pi(y|X(n)) + \sqrt{S(X(n);\epsilon)} \,\Xi(n),
\qquad \Xi(n) \sim {\rm N}(0,I).
\end{equation}
Furthermore, a particular choice of $\pi(y|x)$ can be used for conditional sampling \cite{GLRY24}. We note that (\ref{eq:discrete Bayes}) fits into the general plug \& play approach to data-aware Bayesian inference \cite{PnP,AMOS19,plugandplay}.

While it has been demonstrated in \cite{GLRY24} that (\ref{eq:propagator_general}) and (\ref{eq:discrete Bayes}) work well for low-dimensional problems, the required number of training samples, $M$, increases exponentially in the dimension, $d$, of the samples \cite{WR20}. In order to remedy this manifestation of the curse of dimensionality, we propose to utilize conditional independence in order to replace the Schr\"odinger bridge estimator for the conditional expectation value $m(x;\epsilon) \in \mathbb{R}^d$ by appropriately localized Schr\"odinger bridge estimators in each of the $d$ components of $m(x;\epsilon)$ and similarly for the diffusion matrix $S(x;\epsilon)$. The proposed localization strategy resembles localization strategies used in the ensemble Kalman filter (EnKF) \citep{Evensenetal2022,reich2015probabilistic,asch2016data}; but is fundamentally different in at least two ways: (i) For Gaussian measures with covariance matrix $C$, the EnKF would localize the empirical estimator of $C$ while our approach relies on the localization of the precision matrix $C^{-1}$ as dictated by conditional independence. (ii) Localized Schr\"odinger bridge estimators are not restricted to Gaussian measures as long as conditional independence can be established. Furthermore, we extend the proposed localization strategy to conditional mean estimators based on kernel denoising \cite{MD24}.

The paper is organized as follows. The Schr\"odinger bridge formulation for $m(x;\epsilon)$ and $S(x;\epsilon)$ in (\ref{eq:propagator_general}) is summarized in the subsequent Section \ref{sec:PaP LD}. There we also discuss connections to minimum mean square error (MMSE) denoising and kernel-based denoising, in particular to \cite{MD24}. The localized variant is subsequently developed in Section \ref{sec:lsbs} first for a Gaussian distribution for which $C^{-1}$ has a tri-diagonal structure and then for general target measure $\nu({\rm d}x)$ for which conditional independence holds. An algorithmic summary is provided in Algorithm \ref{alg:LSBS} and a discussion of numerical properties is provided in Section \ref{sec:properties}. Localization is extended to kernel-based denoising in Section \ref{sec:local KDE}. We discuss a connection between the Schr\"odinger bridge sampler and  self attention transformers \cite{Transformer} in Remark \ref{rem:2} and its localized variant in the context of multi-head transformers in Remark \ref{rem:3}. As applications, we consider sampling temporal stochastic processes in Section \ref{sec:temporal} and conditional sampling for a closure problem arising from the multi-scale Lorenz-96 model \citep{Lorenz96} in Section \ref{sec:cond}. The paper closes with some conclusions and suggestions for further work.

%%%%%%%%%%%%%%%%%%%%%%%%%%%%%%%%%%%%%%%
%
\section{Plug \& play Langevin sampler} \label{sec:PaP LD}
%
%%%%%%%%%%%%%%%%%%%%%%%%%%%%%%%%%%%%%%%

In this section, we summarize two particular variants of plug \& play Langevin samplers \cite{plugandplay}.
The first sampler has been proposed in \cite{GLRY24} and is based on a Schr\"odinger bridge approximation of the Langevin semi-group with invariant measure $\nu ({\rm d}x)$ \cite{WR20}. The second sampler builds upon kernel denoising \cite{MD24} and Tweedie's formula \cite{tweedie}.

%%%%%%%%%%%%%%%%%%%%%%%%%%%%%%%%%%%%%%%%%%%%%%%%%%%%%%%%%%%%
%
\subsection{Schr\"odinger bridge sampler} \label{sec:SB sampler}
%
%%%%%%%%%%%%%%%%%%%%%%%%%%%%%%%%%%%%%%%%%%%%%%%%

In this subsection, we briefly recall how to approximate the conditional estimates (\ref{eq:CE}) and (\ref{eq:Sigma}) using Schr\"odinger bridges. One first introduces the symmetric matrix $T \in \mathbb{R}^{M\times M}$ of (unnormalized) transition probabilities
\begin{equation} \label{eq:tij}
(T)_{jk} = \exp \left( -\frac{1}{4\epsilon} \|x^{(k)}-x^{(j)}\|^2 \right)
\end{equation}
for $j,k = 1,\ldots,M$. See \cite{GLRY24} for a more general definition involving a state-dependent scaling matrix $K(x)$ and variable bandwidth implementation $K(x) = \rho(x)I$ with $\rho(x)>0$ a suitable scaling function.

One next introduces the uniform probability  vector 
$w^\ast = (1/M,\ldots,1/M)^{\top} \in \mathbb{R}^M$ over the samples $\{x^{(j)}\}_{j=1}^M$. The associated Schr\"odinger bridge problem can be reformulated into finding the non-negative scaling vector $v \in \mathbb{R}^M$ such that the symmetric matrix
\begin{equation} \label{eq:DM_approximation}
P = D(v) \,T \,D(v)
\end{equation}
is a Markov chain with invariant distribution $w^\ast$, i.e.,
\begin{equation}
P \,w^\ast = w^\ast.
\label{eq:PDTD}
\end{equation}
Here $D(v) \in \mathbb{R}^{M\times M}$ denotes the diagonal matrix with diagonal entries provided by $v \in \mathbb{R}^M$. We remark that the standard scaling used in Schr\"odinger bridges would lead to a bistochastic matrix $\tilde P$, which is related to (\ref{eq:DM_approximation}) by $\tilde P = M^{-1} P$ \cite{PeyreCuturi}.

The next step is to extend the discrete Markov chain (\ref{eq:DM_approximation}) to all $x\in \mathbb{R}^{d}$. For that purpose one introduces the vector-valued function $t(x) \in \mathbb{R}^M$ with entries
\begin{equation}
\label{eq:tvec}
t^{(j)}(x) =  \exp \left( -\frac{1}{4\epsilon} \|x-x^{(j)}\|^2 \right)
\end{equation}
for $j = 1,\ldots,M$. One then defines the probability vector $w(x) \in \mathbb{R}^M$ using the Sinkhorn weights, $v$, obtained in~\eqref{eq:DM_approximation}, i.e., 
\begin{equation} \label{eq:probability vectors}
w (x) = \frac{D(v) \,t(x)}{v^{\top} t (x)} \in \mathbb{R}^M
\end{equation}
for all $x\in \mathbb{R}^d$. This vector gives the transition probabilities from any $x$ to the data samples, which we collect in the data matrix of samples
\begin{equation}\label{eq:data_vec}
\mathcal{X} = (x^{(1)},\ldots,x^{(M)}) \in \mathbb{R}^{d\times M}.
\end{equation}
Hence, the desired sample-based approximation of the conditional mean is given by
\begin{equation} \label{eq:mean}
m(x;\epsilon) := \mathcal{X} \,w (x),
\end{equation}
which provides a finite-dimensional approximation of the conditional expectation value $\mu(x;\epsilon)$
of the true underlying diffusion process (\ref{eq:score_based}). Note that the conditional mean $m(x;\epsilon)$ lies in the convex hull of the data since $0\le w(x)\le 1$ is a probability vector for all $x$

We also recall a data-aware choice of the covariance matrix $S (x;\epsilon)$  \cite{GLRY24}. Using~\eqref{eq:probability vectors} and~\eqref{eq:data_vec}, one can define the conditional covariance matrix
\begin{equation} \label{eq:cm_estimate}
S(x;\epsilon) = \mathcal{X}\,D(w(x))\,  \mathcal{X}^{\top} -
m(x;\epsilon)\, m(x;\epsilon)^{\top} \in
\mathbb{R}^{d\times d},
\end{equation}
which is the empirical covariance matrix associated with the probability vector $w(x)$; compare (\ref{eq:Sigma}).

It has been found advantageous in \cite{GLRY24} to replace the time-stepping method
(\ref{eq:propagator_general}) by the split-step scheme
\begin{subequations} \label{eq:update_ss}
    \begin{align}
        X(n+1/2) &= X(n) + \sqrt{S(X(n);\epsilon)}\, \Xi(n),\qquad \Xi(n) \sim {\rm N}(0,I),\\
        X(n+1) &=  m (X(n+1/2);\epsilon),
    \end{align}
\end{subequations}
which can be viewed as sequential noising and denoising steps. The key property of the Schr\"odinger bridge sampler is that the final step of the Langevin sampler (\ref{eq:update_ss}b) amounts to a projection into the convex hull of the samples, independent of the outcome of the noising step (\ref{eq:update_ss}a). This renders the sampling scheme numerically stable for any finite sample size $M$. This is in contrast to traditional Langevin samplers such as score generative models which directly solve the typically stiff Langevin equation \eqref{eq:score_based}; e.g., in case the probability measure $\nu({\rm d}x) $ concentrates on a submanifold $\mathcal{M}\subset \mathbb{R}^d$, simulating the Langevin equation necessitates computationally costly sufficiently small time steps to resolve the fast attraction toward the submanifold \cite{diffusion2}. 

\begin{rem} \label{rem:2}
We point to a connection of (\ref{eq:update_ss}b) to self-attention transformer architectures \cite{Transformer}. We recall that the attention function acts on a matrix $Q\in \mathbb{R}^{N \times d}$ of $N$ queries, a matrix $K \in \mathbb{R}^{M \times d}$ of $M$ keys, and a matrix $V \in \mathbb{R}^{M \times d}$ of $M$ values in the form of
\begin{equation} \label{eq:attention}
\mbox{Attention}\,(Q,K,V) = \mbox{softmax} \left(
\frac{Q K^{\rm T}}{\sqrt{d}} \right) V.
\end{equation}
In the context of (\ref{eq:update_ss}b) we find that $Q = X(n+1/2)^{\rm T}$ and $K = V = \mathcal{X}^{\rm T}$. Hence $N=1$ and the output of the softmax function becomes a probability vector of dimension $1\times M$ which we denote by $\breve{w}$.  Note that $\breve{w} \in \mathbb{R}^{1\times M}$ multiplies $V = \mathcal{X}^{\rm T} \in \mathbb{R}^{M\times d}$ from the left resulting in essentially the transpose of what has been used in (\ref{eq:mean}) with, however, a differently defined probability vector. Indeed, the Schr\"odinger bridge sampler defines the probability vector (\ref{eq:probability vectors}) in a manner closely related to what has been proposed as Sinkformer in \cite{SABP22} and the scaling factor $\sqrt{d}$ in (\ref{eq:attention}) is substituted by $2\epsilon$. More specifically, we note that (\ref{eq:tij}) could be replaced by
\begin{equation} 
(T)_{jk} = \exp \left( \frac{(x^{(k)})^\top x^{(j)}}{2\epsilon}  \right)
\end{equation}
without changing the resulting Schr\"odinger bridge approximation $P$ (albeit with a different scaling vector $v$ compared to (\ref{eq:DM_approximation})). One would then also have to replace (\ref{eq:tvec}) 
and (\ref{eq:probability vectors}) by
\begin{equation} \label{eq:shift}
\hat{t}^{(j)}(x) =  v^{(j)}\exp \left( \frac{x^\top x^{(j)}}{2\epsilon}  \right) =
\exp\left( \frac{x^\top x^{(j)}}{2\epsilon} +
\log v^{(j)}\right)
\end{equation}
and
\begin{equation} 
\hat{w}^{(j)} (x) = \frac{\hat{t}^{(j)}(x)}{\sum_{j=1}^M \hat{t}^{(j)} (x)} 
\end{equation}
for $j=1,\ldots,M$, respectively, in line with self-attention transformer architectures which do not involve the shift by $\log v^{(j)}$ in (\ref{eq:shift}). %\hfill $\Box$
\end{rem}

\noindent
The denoising step (\ref{eq:update_ss}b) has a gradient structure since
\begin{equation} \label{eq:gradient structure}
m(x;\epsilon)  = x + \epsilon \nabla \log p_\epsilon (x)
\end{equation}
with (unnormalised) density
\begin{equation}
p_\epsilon (x) = \left(v^{\top} t(x)\right)^2
\end{equation}
and
\begin{equation}
\nabla \log p_\epsilon (x) = 
-\frac{1}{\epsilon}\frac{\sum_{j=1}^M (x-x^{(j)}) v^{(j)}t^{(j)}(x)}{v^{\rm T} t(x)} = \frac{1}{\epsilon}(\mathcal{X}\, w(x)-x).
\end{equation}
Hence the proposed sampler can be viewed as an EM approximation of the modified Langevin dynamics
\begin{equation}
    \dot{X}(\tau) = \nabla \log p_\epsilon(X(\tau)) + \sqrt{2}\,\dot{W}(\tau).
\end{equation}
A modified score has also been considered in the form of Moreau--Yosida regularised score functions in \cite{DMP18} and smoothed score functions in the form of plug \& play priors in \cite{plugandplay}. Contrary to those approaches, the modified score $\nabla \log p_\epsilon(x)$ arises from the Schr\"odinger bridge approximation of the semi-group $\exp(\epsilon \mathcal{L})$ with generator $\mathcal{L}$ \cite{Pavliotis2016} given by 
\begin{equation}
\mathcal{L}f = \nabla \log \pi(x) \cdot \nabla f + \Delta f.
\end{equation}

%%%%%%%%%%%%%%%%%%%%%%%%%%%%%%%%%%%%%%%%%%%
%
\subsection{Kernel denoising and Tweedie's formula}
%
%%%%%%%%%%%%%%%%%%%%%%%%%%%%%%%%%%%%%%%%%%%%%%%%%

We note that (\ref{eq:update_ss}b) is related to MMSE denoising as widely used to reduce random fluctuations in a signal. The connection between score estimation, autoencoders, and denoising has been discussed in \cite{Vincent,AB14}. See also the recent survey \cite{MD24}. However, while MMSE denoising typically considers conditional mean estimators in pseudo-linear form \cite{MD24} or in the form of auto-encoders \cite{AB14}, our approach relies on (nonlinear) conditional mean estimators  of the form (\ref{eq:mean}), which also arise from kernel denoising \cite{MD24}, which is closely related to Tweedie's formula \cite{tweedie} as we explain next. 

Given the data distribution $\pi$ and a scale parameter $\gamma >0$, consider the extended (unnormalized) distribution $\Pi_\gamma$ in $(x,x') \in \mathbb{R}^{2d}$ defined by
\begin{equation}
\Pi_\gamma(x,x') = \exp \left(-\frac{1}{2\gamma}
\|x-x'\|^2 \right) \,\pi(x')
\end{equation}
and its (unnormalized) marginal distribution
\begin{equation}
    \pi_\gamma (x) = \int \Pi_\gamma (x,x')\,{\rm d}x'.
\end{equation}
Tweedie's formula \cite{tweedie} states that
\begin{equation} \label{eq:tweedie}
    \nabla \log \pi_\gamma (x) = -\frac{1}{\gamma} \left(x - \mathbb{E}[x'|x]\right)
\end{equation}
with the conditional expectation value defined by
\begin{equation}
\mathbb{E}[x'|x] = \frac{\int x' \,\Pi_\gamma (x,x')\,{\rm d}x'}{\pi_\gamma (x)}.
\end{equation}
We may now replace the data distribution $\pi_\gamma$ by the empirical measure over the training samples $\{x^{(j)}\}_{j=1}^M$ to obtain the equally weighted (unnormalized) Gaussian kernel density estimator (KDE)
\begin{equation} \label{eq:KDE}
\tilde{\pi}_\gamma (x) = 
\sum_{j=1}^M \exp \left(-\frac{1}{2\gamma}
\|x-x^{(j)}\|^2 \right),
\end{equation}
which, according to (\ref{eq:tweedie}), leads to the score function
%\begin{subequations}
\begin{align}    
s(x;\gamma) = \nabla \log \tilde{\pi}_\gamma (x) 
%&=
%-\frac{1}{\gamma}\frac{\sum_{j=1}^M
%(x-x^{(j)}) \exp \left(-\frac{1}{2\gamma}
%\|x-x^{(j)}\|^2 \right)}{\sum_{j=1}^M \exp\left(-\frac{1}{2\gamma}
%\|x-x^{(j)}\|^2 \right)} \\
= -\frac{1}{\gamma}\left( x - \frac{\sum_{j=1}^M
x^{(j)} \exp \left(-\frac{1}{2\gamma}
\|x-x^{(j)}\|^2 \right)}{\sum_{j=1}^M \exp\left(-\frac{1}{2\gamma}
\|x-x^{(j)}\|^2 \right)} \right).
\end{align}
%\end{subequations}
Using this score function in (\ref{eq:EM_SDE}) with $\gamma = \epsilon$ results in a scheme of the form (\ref{eq:propagator_general}) with $S(x;\epsilon) = 2\epsilon I$ and the conditional mean estimator $m(x;\epsilon)$ being replaced by the denoiser
\begin{equation} \label{eq:KDE drift}
D(x;\epsilon) := \frac{\sum_{j=1}^M
x^{(j)} \exp \left(-\frac{1}{2\epsilon}
\|x-x^{(j)}\|^2 \right)}{\sum_{j=1}^M \exp\left(-\frac{1}{2\epsilon}
\|x-x^{(j)}\|^2 \right)} = \mathcal{X} \,\tilde w(x),
\end{equation}
where the weight vector $\tilde w(x) \in \mathbb{R}^M$ is now defined by
\begin{equation}
\tilde w^{(j)}(x) = \frac{\exp \left(-\frac{1}{2\epsilon}
\|x-x^{(j)}\|^2 \right)}{\sum_{j=1}^M \exp\left(-\frac{1}{2\epsilon}
\|x-x^{(j)}\|^2 \right)}, \qquad j = 1,\ldots,M.
\end{equation}
We find that $m(x;\epsilon)$ and $D(x;\epsilon)$ differ through the additional Sinkhorn weight vector $v\in \mathbb{R}^M$ in (\ref{eq:probability vectors}) and the  scale parameter $2\gamma=2\epsilon$ in (\ref{eq:KDE}) compared to the scale parameter $4\epsilon$ used in \eqref{eq:tvec}. The connections drawn in Remark \ref{rem:2} to transformer architectures apply equally to (\ref{eq:KDE drift}).

The results of \cite{WR20} suggest that (\ref{eq:mean}) provides a more accurate approximation to the conditional expectation value (\ref{eq:CE}) than (\ref{eq:KDE drift}), which is based  on the equally weighted Gaussian mixture approximation (\ref{eq:KDE}).  In particular, it holds that
\begin{equation}
    m(x;\epsilon) := \exp(\epsilon \mathcal{L}) \,{\rm id}(x) = x + \epsilon \nabla \log \pi(x) + \mathcal{O}(\epsilon^2)
\end{equation}
in the limit $M\to \infty$, while Tweedie's formula formally leads to
\begin{equation}
D(x;\epsilon) := x + \epsilon \nabla \log
\tilde \pi_\epsilon (x) = x + \epsilon \nabla \log \pi(x) + \mathcal{O}(\epsilon^2).
\end{equation}
Here ${\rm id}(x)=x$ denotes the identity map. Hence, to leading order in $\epsilon$, both approaches agree. However, while Tweedie's formula leads to an approximation error that arises from replacing $\pi$ by a regularized density ${\tilde{\pi}}_\epsilon$, the Schr\"odinger bridge sampler leads to higher-order corrections which are consistent with the actual underlying Langevin dynamics. This becomes particularly appealing when implemented together with the data-aware covariance matrix (\ref{eq:cm_estimate}) instead of a constant $S(x;\epsilon) = 2\epsilon I$ in (\ref{eq:propagator_general}) or when a variable bandwidth is implemented in \eqref{eq:tij} as was done in \cite{GLRY24}. A precise statement will be the subject of future research. We also stress that the Schr\"odinger bridge sampler can easily be extended to Langevin dynamics with multiplicative noise \cite{GLRY24} while such an extension is unclear when based on a KDE.

While (\ref{eq:mean}) works well for low-dimensional problems and sufficiently large sample sizes $M$, applications to medium- or high-dimensional problems have remained an open challenge since accurate approximations of the Schr\"odinger bridge problem require an exponentially increasing number of samples as the dimension, $d$, of the sample space $\mathbb{R}^d$ increases (cf.~\cite{WR20}). The curse of dimensionality applies equally to the KDE-based approximation (\ref{eq:KDE drift}) and a failure to generalize has been discussed recently in the context of score-generative models \cite{LCL24}.

The key observation of this paper is that the approximation of conditional expectations (\ref{eq:CE}) via Schr\"odinger bridges does not necessarily require the full Markov chain (\ref{eq:DM_approximation}) and that localization can be applied provided conditional independence can be established. This idea will be developed in the following section. Localization will subsequently be extended to kernel-based denoising in Subsection \ref{sec:local KDE}.

%%%%%%%%%%%%%%%%%%%%%%%%%%%%%%%%%%%%%%%%%%%%%%%%
%
\section{Localized Schr\"odinger bridge sampler}
%
%%%%%%%%%%%%%%%%%%%%%%%%%%%%%%%%%%%%%%%%%%%%
\label{sec:lsbs}

To introduce the main idea of localizing the Schr\"odinger bridge sampler developed in \cite{GLRY24} we first consider an illustrative example of sampling from a multivariate Gaussian distribution. We will see that localization allows for a  significant reduction of the number of samples required to achieve a certain accuracy. In particular, the number of samples required to achieve a certain accuracy does not depend on the intrinsic dimension of the samples but rather is determined by the conditional independence which typically leads to a sequence of much lower dimensional estimation problems. 

%%%%%%%%%%%%%%%%%%%%%%%%%%%%%%%%%%%%%%%%%%%%%%%%%
%
\subsection{Motivational example: Gaussian setting}
\label{sec:Example}
%
%%%%%%%%%%%%%%%%%%%%%%%%%%%%%%%%%%%%%%%%%%%%%%%%%

Let $\Delta_h \in \mathbb{R}^{d\times d}$ denote the standard discrete Laplacian over a periodic 
domain $[0,L]$ of length $L>0$ with mesh-size $h = L/d$. We assume that the sampling distribution $\pi(x)$ is
Gaussian with zero mean and covariance matrix
\begin{equation}
C = (I - \Delta_h)^{-1}.
\label{eq:C}
\end{equation}
Instead of the distribution $\pi(x)$, we are given $M$ samples $x^{(j)} \sim {\rm N}(0,C)$, $j=1,\ldots,M$, and denote their $\loc$-th entry by $x_\loc^{(j)}$ for $\loc=1,\ldots,d$. The goal is to produce more samples from ${\rm N}(0,C)$ using the time-stepping scheme (\ref{eq:update_ss}) without making explicit reference to the unknown covariance matrix $C$. This particular setting of a generative model can become arbitrarily challenging by either increasing $L$ for fixed mesh-size $h$ or by decreasing the mesh-size $h = L/d$ for fixed $L$.

In order to gain some insight into the problem, we first consider the standard EM sampler
in case the distribution is known; i.e.,
\begin{equation} \label{eq:EM sampler}
    X(n+1) = X(n) - \epsilon( I - \Delta_h) X(n) +
    \sqrt{2\epsilon}\, \Xi(n), \qquad \Xi(n) \sim {\rm N}(0,I).
\end{equation}
Because of the structure of $\Delta_h$, we can rewrite the EM update in the components of $X(n)$ 
in the form
\begin{equation}
X_\loc(n+1) = w_{-1} \,X_{\loc-1}(n) 
+ w_{0}\, X_\loc(n) + w_{1} \,X_{\loc+1}(n) 
+ \sqrt{2\epsilon}\,\Xi_\loc (n), \qquad \loc = 1,\ldots,d,
\label{eq:Xdel}
\end{equation}
with weights
\begin{equation}
w_{\pm 1} = \frac{\epsilon}{2h^2}, \qquad 
w_{0} = 1 -\epsilon\left(1 + \frac{1}{h^2}\right)
\label{eq:wlap}
\end{equation}
and periodic extension of $X_\alpha$ for
$\alpha = 0$ and $\alpha = d+1$. We assume that the step size $\epsilon$ is chosen such that $w_0 \ge 0$.
The EM update \eqref{eq:Xdel} reveals that the conditional expectation value of $X_\loc (n+1)$ only depends on the value of the neighboring grid points of $X(n)$ with weights $w_0$ and $w_{\pm 1}$;
i.e., 
\begin{subequations} 
\label{eq:conditional expectation localized}
\begin{align}
\mathbb{E}[X_\loc(n+1)\,|\,X(n)] &=
\mathbb{E}[X_\loc(n+1)\,|\, (X_{\loc-1}(n),X_\loc(n),X_{\loc+1}(n))] \\
&= w_{-1} \,X_{\loc-1}(n) 
+ w_{0}\, X_\loc(n) + w_{1} \,X_{\loc+1}(n).
%\mathbb{E}[X_i(n+1)\,|\, X_{(i)}(n)],
\end{align}
\end{subequations}
It is convenient to introduce the short-hand
\begin{equation}
    X_{\locind} := (X_{\loc-1},X_\loc,X_{\loc+1})^{\rm T} \in \mathbb{R}^{d_\loc}, 
    \label{eq:xloc}
\end{equation}
with $d_\loc = 3$, to denote the set of neighboring grid points of $X_\loc$. 

To help the reader navigating the various indices and sub- and superscripts we summarize here our notation. Superscripts $(j)$ are reserved to denote samples $j=1,\ldots,M$ as well as components of vectors in $\mathbb{R}^M$. For example, the components of the probability vector $w \in \mathbb{R}^M$ are denoted by $w^{(j)}$.
The Greek subscript $\loc $ with $\loc=1,\ldots,d$ is reserved to denote components of a vector $x$ in state space $\mathbb{R}^d$, i.e.~$x_\loc$ for $\loc = 1,\ldots,d$. Subscripts $\locind$ are reserved to denote localization around a component $\loc$; i.e., $x_\locind \in \mathbb{R}^{d_\loc}$.

The dependency of the conditional expectation value \eqref{eq:conditional expectation localized} on the neighboring points is to be exploited in the update step (\ref{eq:update_ss}b), which we recall here in its component-wise formulation as
\begin{equation}
X_\loc(n+1) = \sum_{j=1}^M x_\loc^{(j)}w^{(j)} (X(n+1/2)) ,
\end{equation}
for $\alpha=1,\ldots,d$.
We recall from our previous considerations that the conditional expectation value of $X_\loc(n+1)$ should depend on $X_{\locind}(n+1/2)$ only. Hence the question arises whether we can find appropriately localized probability vectors $w(x)$ for the Schr\"odinger bridge sampler (\ref{eq:update_ss}).  The following formal argument can be made. We restrict ${\rm N}(0,C)$ to $x_\locind \in \mathbb{R}^{d_\loc}$ and truncate the samples $x^{(j)}$, $j=1,\ldots,M$, accordingly to yield $x_{\locind}^{(j)}$. The covariance matrix $C_{\rm r} \in \mathbb{R}^{d_\loc\times d_\loc}$ of the reduced random variables $X_{\locind} \in \mathbb{R}^{d_\loc}$ is simply the restriction of $C$ to the corresponding sub-space, which in this particular example is independent of $\alpha$. Furthermore, using the Schur complement, one finds
\begin{equation}
    C_{\rm r}^{-1} = \left( \begin{array}{ccc} \ast  & -\frac{1}{2h^2} & \ast\\
    -\frac{1}{2h^2} & 1+\frac{1}{h^2} & -\frac{1}{2h^2} \\ \ast & -\frac{1}{2h^2} & \ast
    \end{array} \right),
\end{equation}
where $\ast$ denotes entries which differ from the matrix which would be obtained by restricting $C^{-1}$ to the corresponding sub-space. The important point is that the central 
elements remain identical (cf. \eqref{eq:wlap}) and that only those entries enter the approximation of the conditional expectation value (\ref{eq:conditional expectation localized}).

We now describe a localized Schr\"odinger bridge approach for this specific problem. One replaces the matrix $T \in \mathbb{R}
^{M\times M}$ with entries (\ref{eq:tij})
by localized matrices $T_\loc \in \mathbb{R}^{M\times M}$ with entries
\begin{equation} \label{eq:tij_local}
(T_\loc)_{jk} = \exp \left( -\frac{1}{4 \epsilon} \|x_\locind^{(j)}-x_\locind^{(k)}\|^2 
\right), \qquad 
j,k = 1,\ldots,M,
\end{equation}
for fixed $\loc \in \{1,\ldots,d\}$. For each of these localized matrices $T_\alpha$ we employ the local Sinkhorn algorithm to obtain the Sinkhorn weights $v_\loc \in \mathbb{R}^M$ for $\loc=1,\ldots,d$, which render
\begin{align}
P_\loc = D(v_{\loc})T_\loc D(v_{\loc})
\end{align}
bistochastic (cf.\eqref{eq:PDTD}). The key point is that the Euclidean norm in $\mathbb{R}^d$, $d\gg 1$, is replaced by the Euclidean norm in $\mathbb{R}^{d_\loc}$ with $d_\loc = 3$. Furthermore, in this particular example, the corresponding Schr\"odinger bridge approximately couples the restricted Gaussian distribution ${\rm N}(0,C_r)$ with itself. Next, the single $M$-dimensional probability vector (\ref{eq:probability vectors}) is replaced by $d$ $M$-dimensional probability vectors
\begin{equation} \label{eq:pv local}
    w_\loc(x_\locind) := \frac{D(v_\loc) \, t_\loc(x_\locind)}
    {v_\loc^{\top} t_\loc (x_\locind)}, \qquad \loc =1,\ldots,d,
\end{equation}
which depend on $x_\locind \in \mathbb{R}^{d_\loc}$ and where the vector-valued function $t_\loc (x_\locind) \in \mathbb{R}^M$ has entries
\begin{equation}
\label{eq:tij_local2}
t_\loc^{(j)}(x_\locind) =  \exp \left( -\frac{1}{4\epsilon} \|x_\locind^{(j)}-x_\locind\|^2 
 \right), \qquad 
j = 1,\ldots,M.
\end{equation}
Note that (\ref{eq:pv local}) depends on the restricted vectors $x_\locind^{(j)} \in \mathbb{R}^{d_\loc}$, $j=1,\ldots,M$, only. It can be verified by explicit calculation that the interpolation property
\begin{equation}
    w^{(j)}_\loc (x^{(k)}_\locind) = (P_\loc)_{jk}, \qquad j,k = 1,\ldots,M,
\end{equation}
holds.

We obtain the localized approximation 
\begin{equation}
\label{eq:mloc}
m_\loc(x_\locind;\epsilon) = \mathcal{X}_{\loc} \,w_\loc (x_\locind), \qquad \loc = 1,\ldots,d,
\end{equation}
of the conditional expectation values, where
\begin{equation}
\mathcal{X}_{\loc} = (x_{\loc}^{(1)},\ldots,x_{\loc}^{(M)}) \in \mathbb{R}^{1\times M}.
\end{equation}
We also introduce the localized data matrix
\begin{equation}
\mathcal{X}_\locind = (x_{\locind}^{(1)},\ldots,x_{\locind}^{(M)}) \in \mathbb{R}^{d_\loc\times M},
\end{equation}
which enters into the computation of $T_\alpha$. 

For constant diffusion $S(x;\epsilon) = 2\epsilon I$ the localized variant of the iteration scheme \eqref{eq:propagator_general} becomes 
\begin{equation} \label{eq:propagator_general_loc}
X_\loc(n+1) = m_\loc(X_\locind(n);\epsilon) + \sqrt{2 \epsilon} \,\Xi_\loc(n), \qquad \loc = 1,\ldots,d,
\end{equation}
for $\Xi(n) \sim {\rm N}(0,I)$. Similarly, the split-step scheme \eqref{eq:update_ss} becomes 
\begin{subequations} \label{eq:update_ss_loc}
    \begin{align}
        X_\locind(n+1/2) &= X_\locind(n) + \sqrt{2 \epsilon}\, \Xi_\locind(n),\\
        X_\loc(n+1) &=  m_\loc (X_\locind(n+1/2);\epsilon).
    \end{align}
\end{subequations}
In other words, we have replaced a single Schr\"odinger bridge update in $\mathbb{R}^d$ by $d$ Schr\"odinger bridge updates in $\mathbb{R}^{d_\loc}$.

\begin{rem} \label{rem:3}
In line with the discussion on transformer architectures from Remark \ref{rem:2}, we wish to point to a connection to multi-head attention \cite{Transformer}. More specifically, our localization procedure has introduced $d$ heads each relying on $\mathcal{X}_\locind$ as matrix of key vectors, $\mathcal{X}_\loc$ as matrix of value vectors, and $X_\locind (n+1/2)$ as query vector in order to produce an update in the scalar-valued entries  $X_\loc(n+1/2)$ for $\alpha = 1,\ldots,d$.
\end{rem}

\noindent
We finally discuss a localized version of the data-aware covariance matrix (\ref{eq:cm_estimate}). Given 
localized weights $w_\loc (x_\locind)$ and localized data matrices $\mathcal{X}_\locind$, we define the
$d_\loc \times d_\loc$-dimensional covariance matrices
\begin{equation} \label{eq:cm_estimate_local}
S_\alpha (x_\locind;\epsilon) = \mathcal{X}_\locind D(w_\loc(x_\locind)\mathcal{X}_\locind^\top  \,
 -
 \mathcal{X}_\locind w_\loc(x_\locind) \, w_\loc(x_\locind)^\top \mathcal{X}_\locind^\top
\end{equation}
for $\alpha = 1,\ldots,d$. Given a sample $\Xi(n) \sim {\rm N}(0,I)$, one first computes the $d_\loc$-dimensional vector $\sqrt{S_\alpha (x_\locind;\epsilon)}\,\Xi_\locind(n)$ of which one picks its scalar entry corresponding to $x_\loc$, which we denote by $\xi_\loc (n)$. The localized variant of (\ref{eq:propagator_general}) then becomes
\begin{equation} \label{eq:propagator general local}
X_\loc(n+1) = m_\loc(X_\locind(n);\epsilon) + \xi_\loc(n), \qquad \loc = 1,\ldots,d.
\end{equation}
%The associated split-step scheme is given by
%\begin{subequations} \label{eq:update_ss_dataaware_loc}
%    \begin{align}
%        X_\loc(n+1/2) &= X_\loc(n) + \xi_\loc(n),\\
%        X_\loc(n+1) &=  m_\loc (X_\locind(n+1/2);\epsilon).
%    \end{align}
%\end{subequations}

%
\begin{figure}[!htp]
    \centering $\,$
    \includegraphics[scale=0.27]{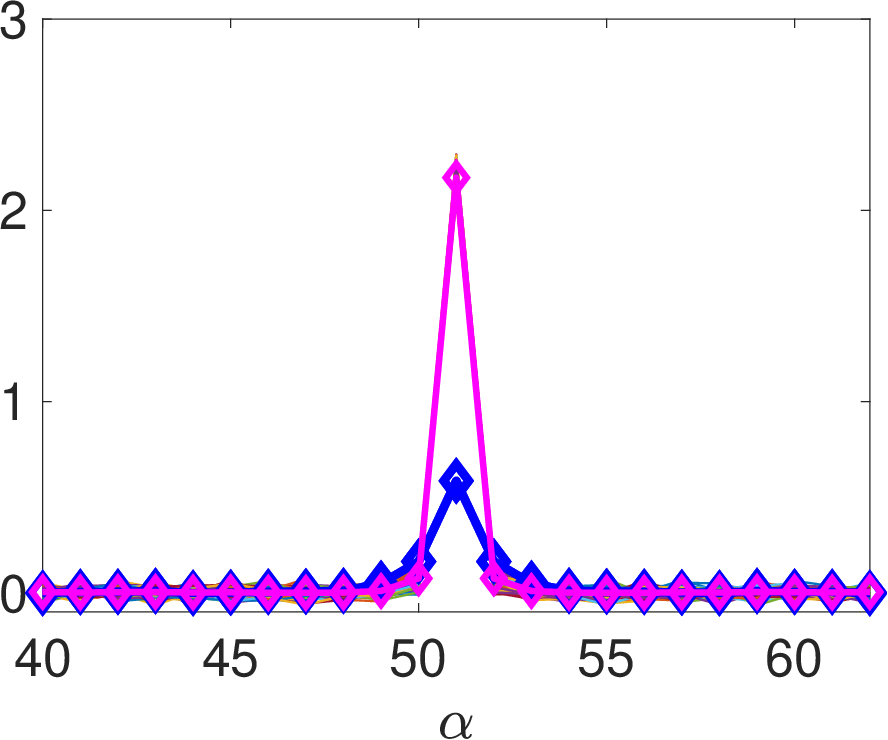}$\quad\,\,\,$
    \includegraphics[scale=0.27]{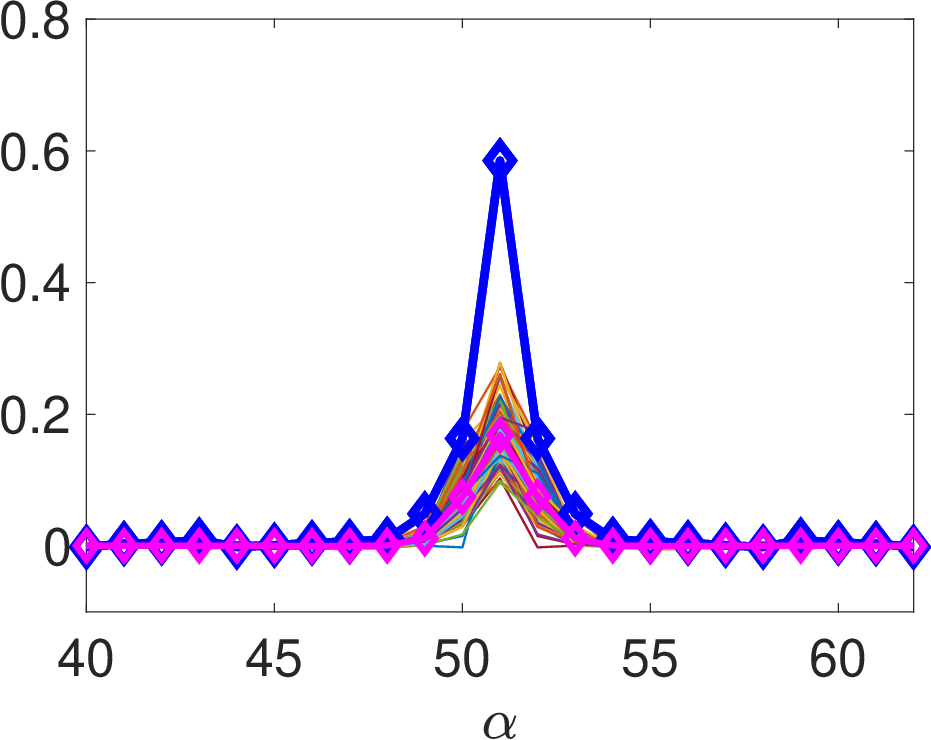}$\quad\,\,$
    \includegraphics[scale=0.27]{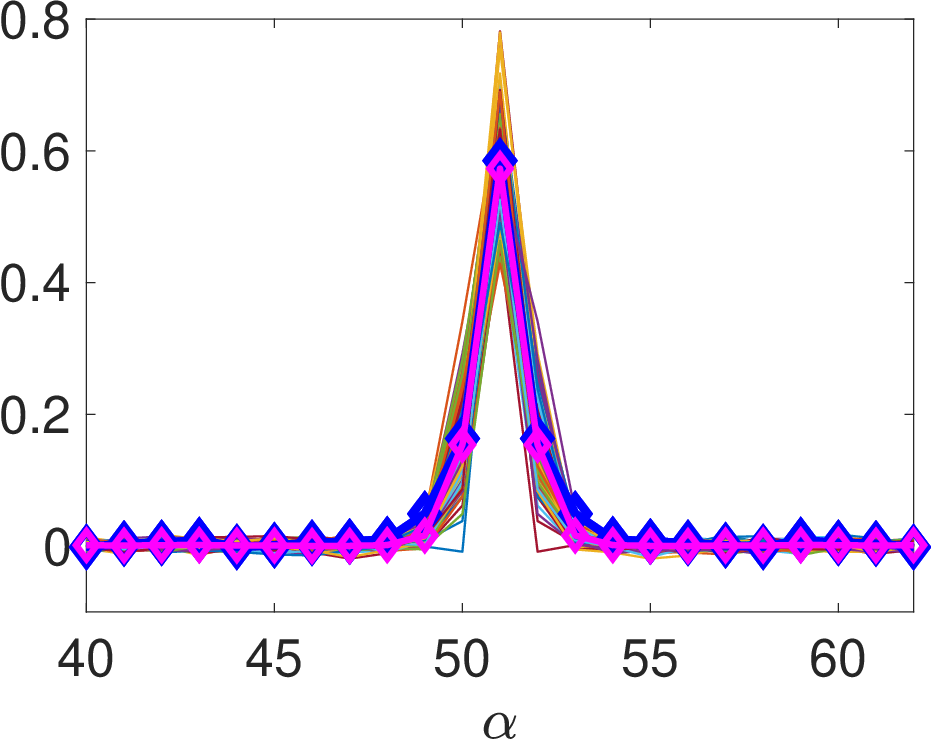}
    \\
    \includegraphics[scale=0.26]{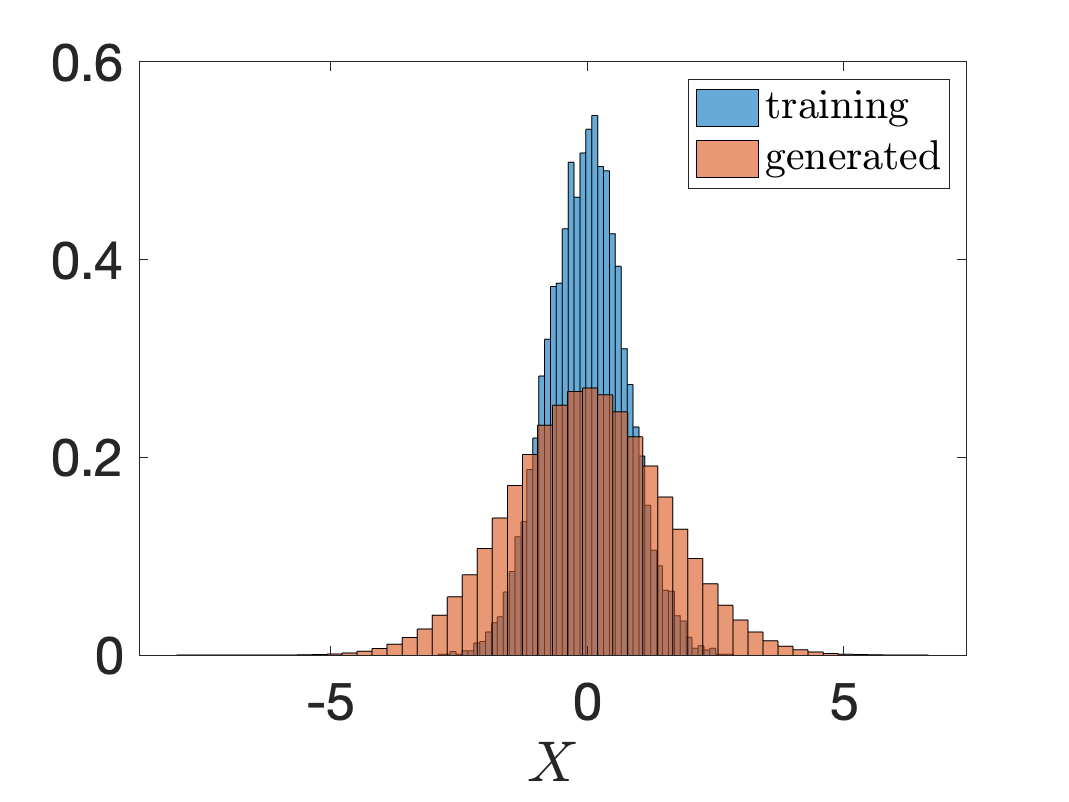}
    \includegraphics[scale=0.26]{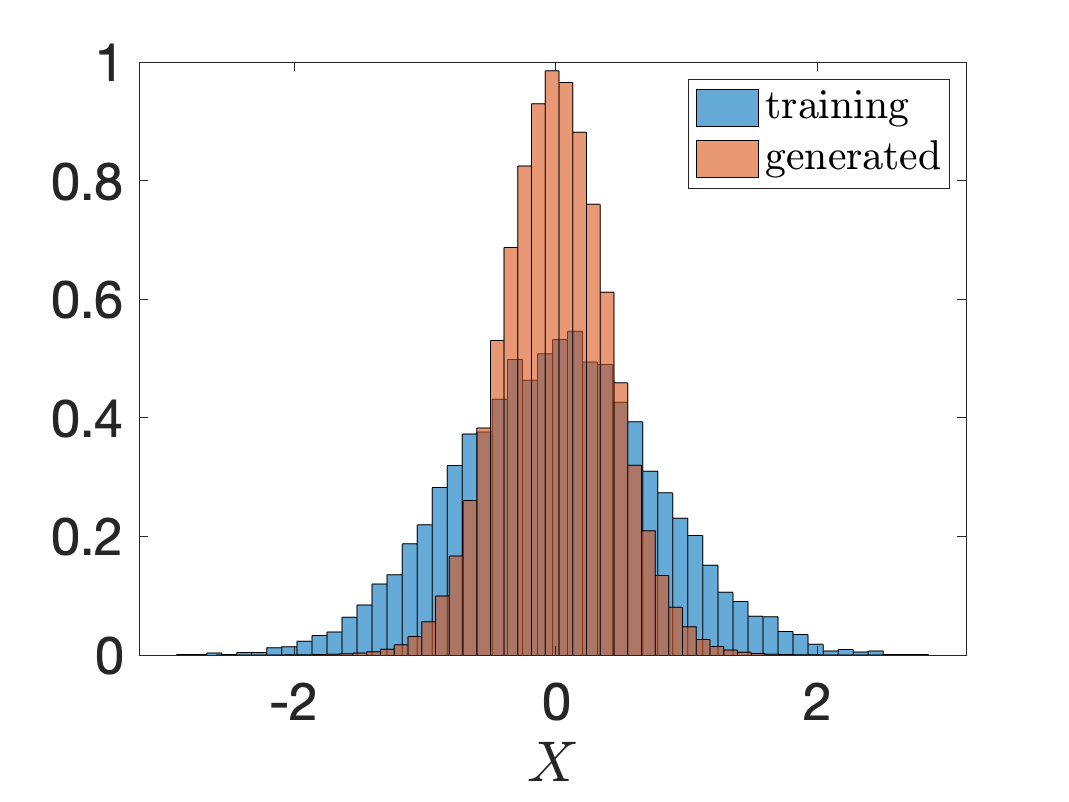}
    \includegraphics[scale=0.26]{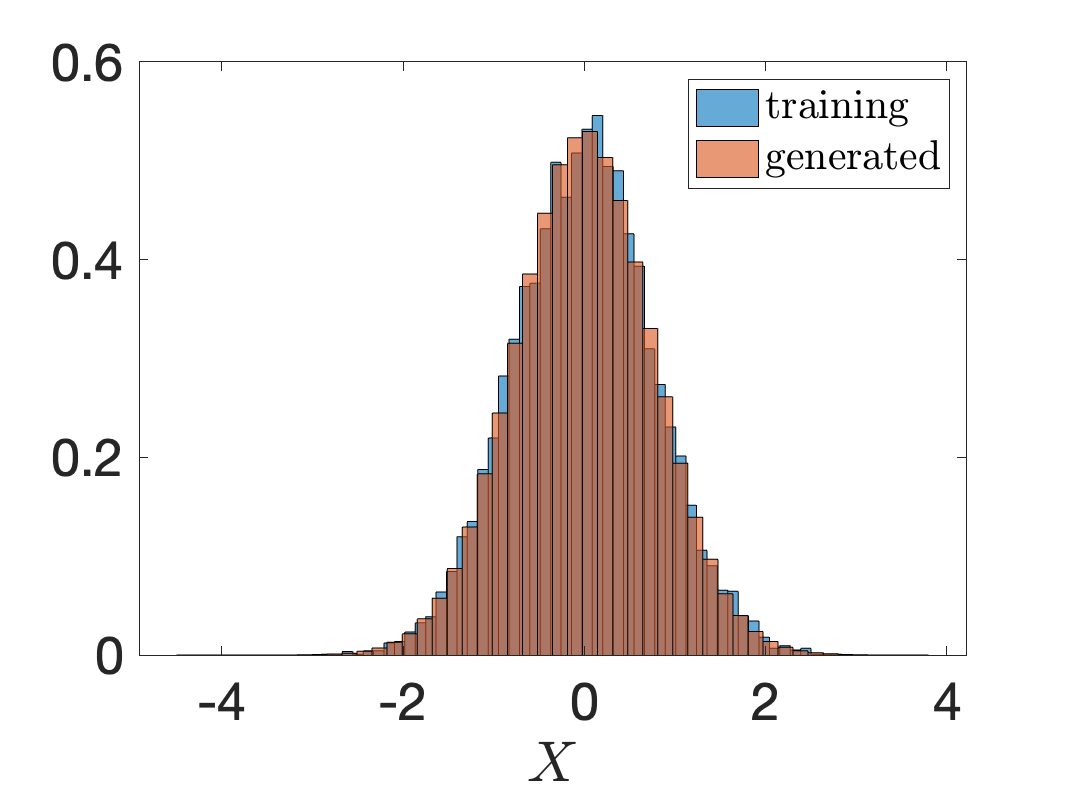}
    \caption{Comparison of the samples obtained from three different variants of localized Schr\"odinger bridge samplers. We show the centered rows of the empirical covariance matrix $\hat C$ (top row) and empirical histograms (bottom row) obtained from using all $d=101$ components. The blue markers denote the empirical covariance for the given samples; the magenta markers show the average over all $d$ rows. Left column: Localized EM-type sampler \eqref{eq:propagator_general_loc}; middle column: Localized split-step sampler \eqref{eq:update_ss_loc}; right column: Localized sampler (\ref{eq:propagator general local}) with data-aware diffusion matrix \eqref{eq:cm_estimate_local}. Given the large value of $\epsilon=1$, only (\ref{eq:propagator general local}) is able to faithfully reproduce the target measure ${\rm N}(0,C)$.}   
    \label{fig:multiGauss1}
\end{figure}
%

%%%%%%%%%%%%%%%%%%%%%%%%%%%%%%%%%%%%%
%
\subsubsection{Numerical illustration} \label{sec:Gauss}
%
%%%%%%%%%%%%%%%%%%%%%%%%%%%%%%%%%%%%%
To illustrate how well the localized sampling strategy is able to generate samples from a multivariate Gaussian, we generate $M$ training samples of a $d$-dimensional multivariate Gaussian with
\begin{align}
x^{(j)}\sim {\rm N}(0,C)
\end{align}
%Doff=0.045 for D=101
for $j=1,\ldots,M$ with a $d\times d$ covariance matrix of the form \eqref{eq:Sigma} with a tridiagonal precision matrix with $C^{-1}_{i,i}=2$, $C^{-1}_{i,i\pm 1}=-0.5$, and periodic conditions $C^{-1}_{1,d}=C^{-1}_{d,1}=-0.5$. The corresponding entries of the covariance matrix are $C_{i,i} \approx 0.58$, $C_{i,i\pm1} \approx 0.15$, $C_{i,i\pm 2} \approx 0.04$, and $C_{i,i\pm 3} \approx 0.01$.

We employ three different implementations of the localized Schr\"odinger bridge sampler with a localization set comprised of two neighboring grid points, i.e.~$d_\loc = 3$. These implementations are (i) the split-step scheme (\ref{eq:update_ss_loc}), (ii) the localized EM-type scheme \eqref{eq:propagator_general_loc}, and (iii) the scheme (\ref{eq:propagator general local}) with data-aware noise update.

In Figure~\ref{fig:multiGauss1} we compare the generated new samples with the given samples for all three sampling strategies. We show the resulting empirical histograms as well as the rows of the empirical covariance matrix
$\hat C$. The rows are centered at the middle point using periodicity. We use $M=100$ training samples and a bandwidth of $\epsilon=1$ for $d=101$ to generate $10,000$ new samples. 

While the split-step scheme  \eqref{eq:update_ss_loc} underestimates the variance of the distribution, the EM-type scheme \eqref{eq:propagator_general_loc} overestimates the variance. Only the localized scheme with data-aware diffusion (\ref{eq:propagator general local}) captures the marginal distribution and the covariance structure with desirable accuracy.

In Figure~\ref{fig:multiGauss2} we investigate the behavior of the localized split-step scheme \eqref{eq:update_ss_loc} for varying parameter $\epsilon \in \{0.01,0.1,1\}$. We find that $\epsilon = 0.1$ improves the results while a further decrease to $\epsilon = 0.01$ leads to results comparable to $\epsilon = 1.0$. Similar results are obtained for the EM-type sampling scheme with constant diffusion \eqref{eq:propagator_general_loc}. An optimal choice of $\epsilon$ depends, of course, on the number of available training samples, which has been set to $M=100$ for these experiments.

We remark that unlocalized Schr\"odinger bridge samplers generate samples with a rather noisy correlation structure which is to be expected for $M = 100$ training samples. 

\begin{figure}[!htp]
    \centering
    \includegraphics[scale=0.27]{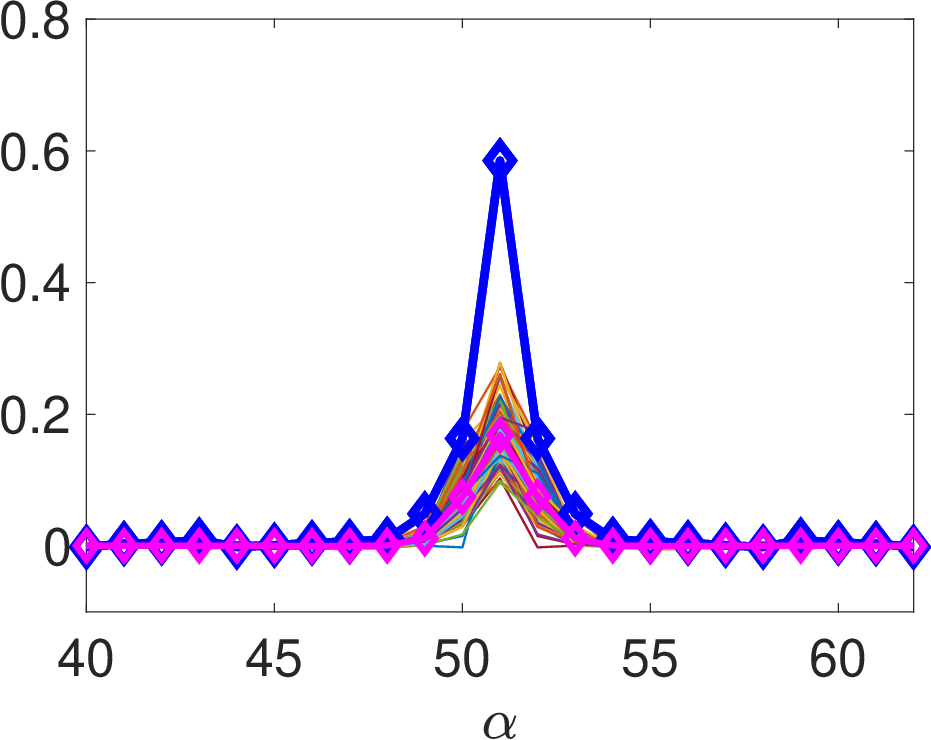}$\quad\,\,\,$
    \includegraphics[scale=0.27]{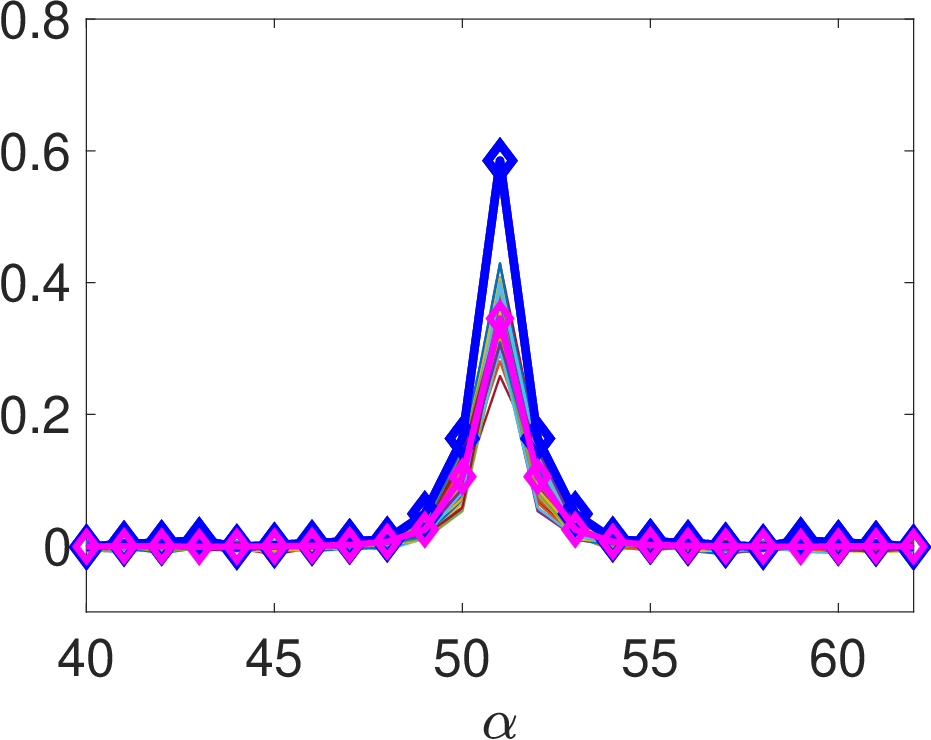}$\quad\,\,\,$
    \includegraphics[scale=0.27]{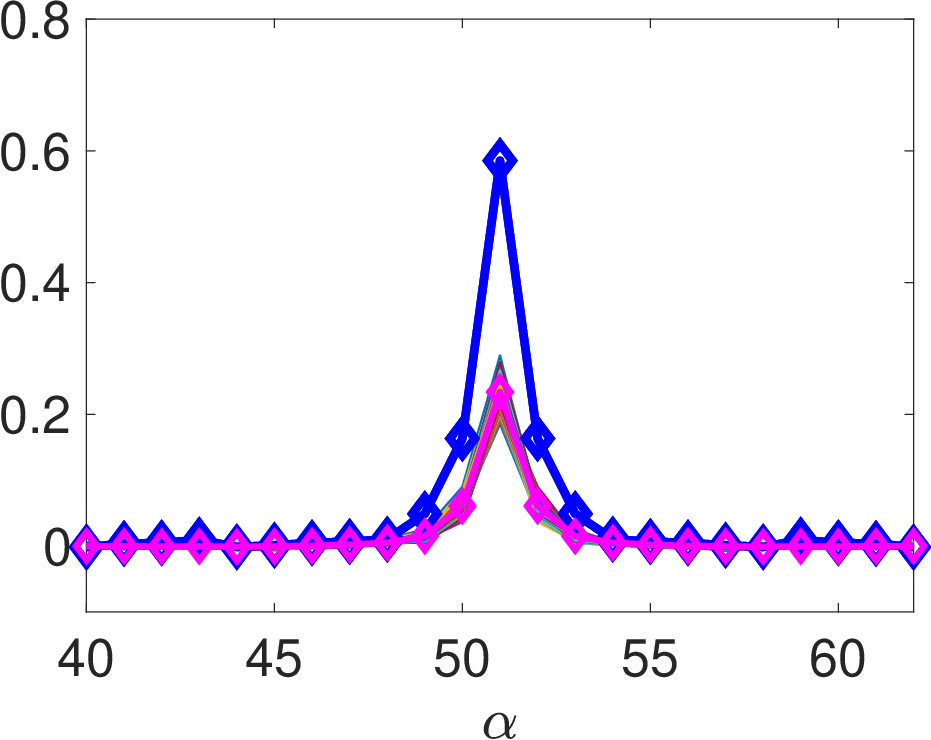}
    \\
    \includegraphics[scale=0.26]{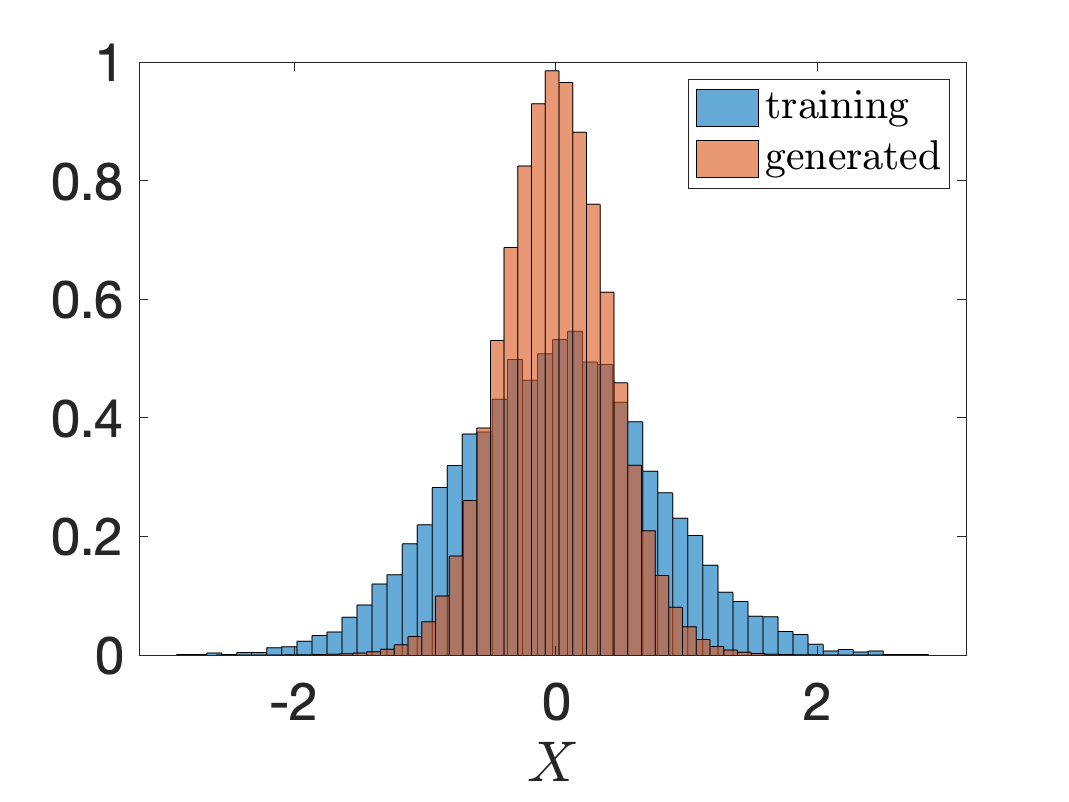}
    \includegraphics[scale=0.26]{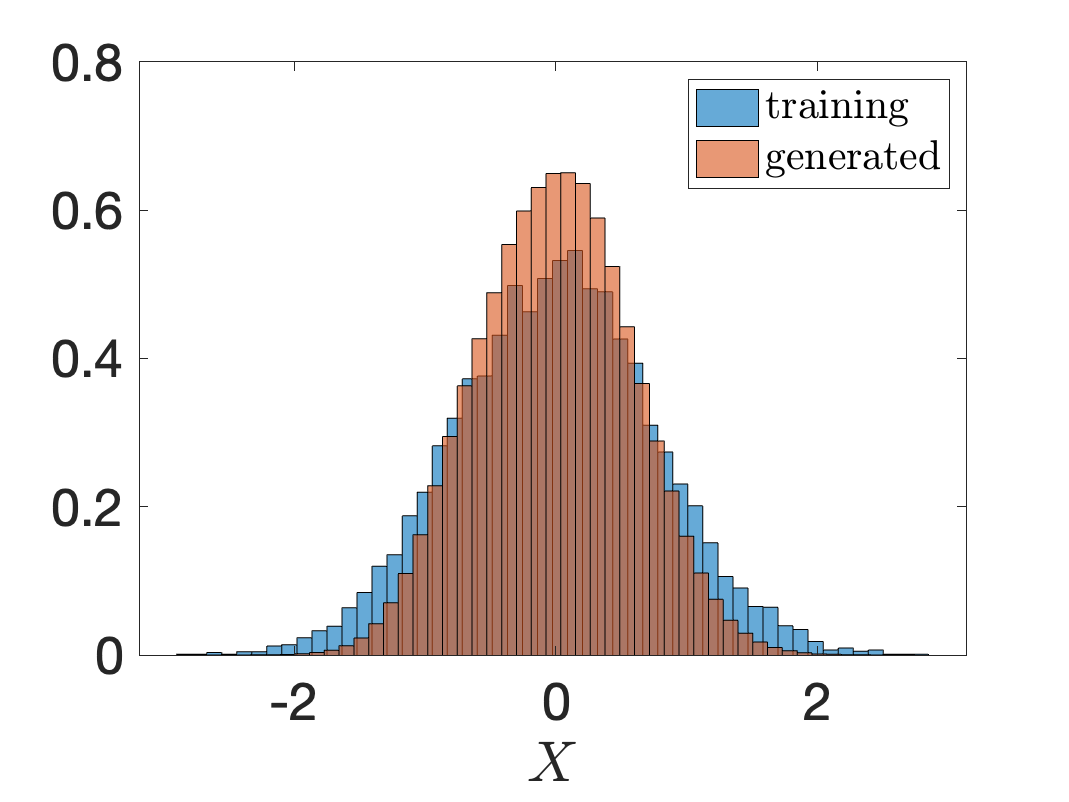}
    \includegraphics[scale=0.26]{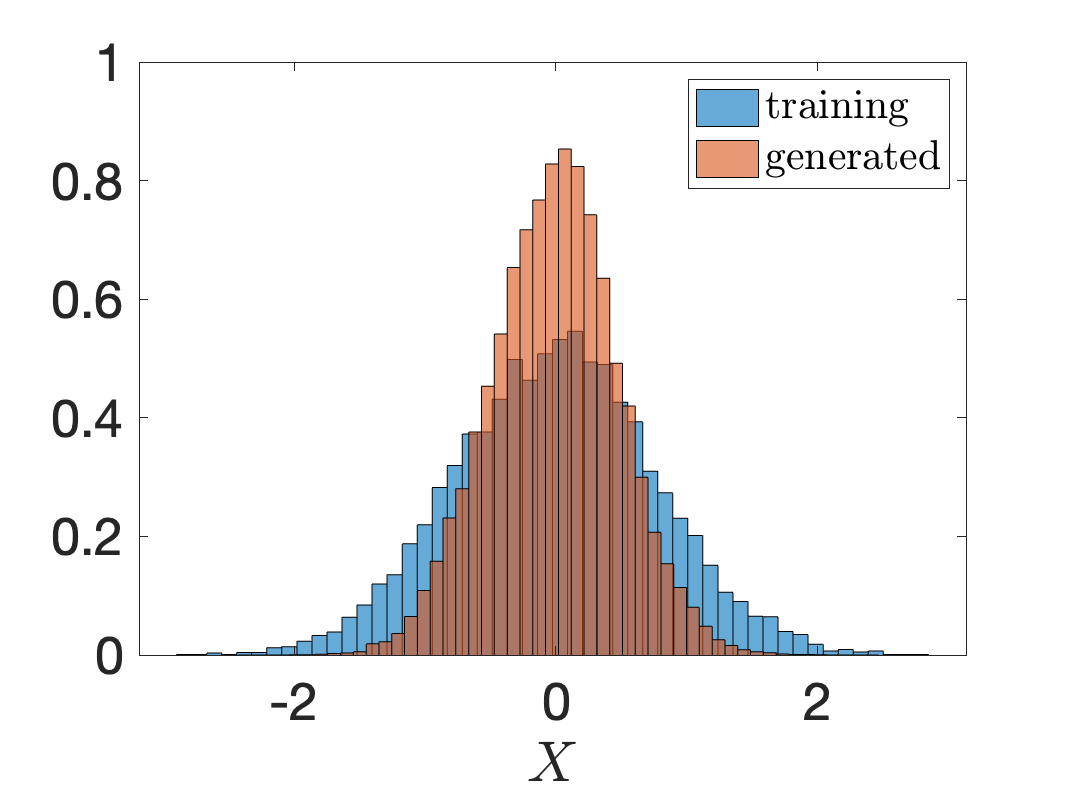}
    \caption{Comparison of the samples obtained from the localized split-step sampler \eqref{eq:update_ss_loc} for varying parameter $\epsilon$. Left column: $\epsilon = 1.0$; middle column: $\epsilon = 0.1$; right column: $\epsilon = 0.01$. While $\epsilon = 0.1$ leads to improved results, it is found that larger and smaller values of $\epsilon$ degrade the performance of the split-step sampler.}   
    \label{fig:multiGauss2}
\end{figure}
%

%%%%%%%%%%%%%%%%%%%%%%%%%%%%%%%%%%%%%%%%%%%%%%%%
%
\subsection{Localized Schr\"odinger bridge sampler for general measures} 
%
%%%%%%%%%%%%%%%%%%%%%%%%%%%%%%%%%%%%%%%%%%%%

The strategy of constructing a dimension-reduced localized Schr\"odinger bridge sampler as presented in the previous example of a multivariate Gaussian readily extends to general target measures $\nu({\rm d}x)$. 

We need to introduce some further notation. For each $\loc$-th entry in the state vector $x \in \mathbb{R}^d$ we introduce a subset $\Lambda (\loc) \subset \{1,\ldots,d\}$ and the associated restriction $x_{\locind} \in \mathbb{R}^{d_\loc}$ of $x \in \mathbb{R}^d$ of dimension $d_\loc = \mbox{card}\,(\Lambda (\loc))$. The complementary part of the state vector is denoted by $x_{\backslash \locind} \in \mathbb{R}^{d-d_\loc}$. In the example of the multivariate Gaussian introduced in Section~\ref{sec:Example}, we have $\Lambda(\loc)=\{\loc-1,\loc,\loc+1\}$ with the obvious periodicity extensions for $\alpha=1$ and $\alpha=d$. With this notation in place, the implementation of the localized Schr\"odinger bridge sampler proceeds as described in Section \ref{sec:Example}. 

The key assumption we make is that of conditional independence of $x_\loc$ on $x_{\backslash \locind}$, which allows for the dimension reduction. We consider variables $x_\loc'$ to be conditionally independent of $x_{\backslash \locind}$ if the conditional distribution in $x_\loc'$, $p_\loc(x_\loc|x;\epsilon)$, satisfies
\begin{align}
p_\loc(x_\loc'|x;\epsilon) = p_\loc(x_\loc'|x_\locind;\epsilon),
\end{align}
which implies $\mathbb{E}[x_\loc^\prime|x] = \mathbb{E}[x_\loc^\prime|x_\locind]$. The conditional expectation value \eqref{eq:mloc} turns out to be a Monte-Carlo approximation of the conditional expectation under this assumption. More precisely, given the transition density of overdamped Langevin dynamics, denoted here by $p(x'|x;\epsilon)$, we obtain
\begin{subequations} \label{eq:conditional mean 1}
\begin{align}
\mathbb{E}[X_\loc(\epsilon)|X(0)=x] &=
\int x_\loc' \,p(x'|x;\epsilon) \,{\rm d}x' =
\int x_\loc' \,p_\loc(x_\loc'|x;\epsilon) \,{\rm d}x_\loc'\\
&=\int x_\loc' \,p_\loc(x_\loc'|x_\locind;\epsilon) \,{\rm d}x_\loc' = \mathbb{E}[X_\loc(\epsilon)|X_\locind(0)=x_\locind],
\end{align}
\end{subequations}
where the second line follows from the conditional independence assumption. 
%Here $p_\loc(x_\loc|x;\epsilon)$ denotes the conditional distribution in $x_\loc$. 
It is reasonable to assume that we can construct a reversible overdamped Langevin process with invariant distribution 
$\pi_\loc(x_\locind)$ and transition kernel $p_\loc(x_\locind'|x_\locind;\epsilon)$ on $\mathbb{R}^{d_\loc}$. Here $\pi_\loc(x_\locind)$ denotes the marginal distribution of $\pi(x)$ in $x_\locind$. Then detailed balance of this dimension-reduced Langevin process is given by 
\begin{equation}
    p_\loc(x_\locind'|x_\locind;\epsilon)\, \pi_\loc(x_\locind) = p_\loc (x_\locind|x_\locind';\epsilon)\, \pi_\loc (x_\locind').
\end{equation}
Note that in our localized Schr\"odinger bridge sampler detailed balance is ensured by the Sinkhorn algorithm which renders the Markov chain reversible. Detailed balance then implies
\begin{subequations} \label{eq:conditional mean}
\begin{align}
\mathbb{E}[X_\loc(\epsilon)|X_\locind(0)=x_\locind] &=
\int x_\loc' \,p_\loc(x_\locind'|x_\locind;\epsilon) \,{\rm d}x_\locind' \\ 
&=\int x_\loc' \,\frac{p_\loc(x_\locind|x_\locind';\epsilon)}{\pi_\loc(x_\locind)} \,\pi_\loc(x_\locind')\,{\rm d}x_\locind'\\
&= \int x_\loc' \,\rho_\loc (x_\locind'|x_\locind;\epsilon)\,\pi(x_\locind')\,{\rm d}x_\locind' ,
\end{align}
\end{subequations}
with
\begin{equation}
\rho_\loc(x_\locind'|x_\locind;\epsilon) := \frac{p_\loc(x_\locind|x_\locind';\epsilon)}{\pi_\loc(x_\locind)}.
\end{equation}
We note that $\rho_\loc(x_\locind'|x_\locind;\epsilon)$ is a density with respect to the reference measure induced by $\pi_\loc(x_\locind)$. We finally approximate the integral in (\ref{eq:conditional mean}c) via Monte Carlo approximation using the restricted data samples $x_\locind^{(j)} \sim \pi_\loc(x_\locind)$, $j=1,\ldots,M$; i.e.,
\begin{equation}
\mathbb{E}[X_\loc(\epsilon)|X_\locind(0)=x_\locind] 
\approx
\sum_{j=1}^M x_\loc^{(j)} \,w_{\loc}^{(j)}(x_\locind), \qquad x_\loc^{(j)} \sim \pi_\loc,
\end{equation}
with $w_{\loc}^{(j)}(x_\locind) \propto \rho_\loc(x_\locind^{(j)}|x_\locind;\epsilon)$ such that $\sum_j w_{\loc}^{(j)}(x) = 1$. This expression is of the form used in the localized Schr\"odinger bridge sampler \eqref{eq:mloc}. 
Furthermore, since the transition kernels $\rho_\loc(x_\locind'|x_\locind;\epsilon)$ are not available in general, the weight vector $w_\loc(x_\locind) \in \mathbb{R}^{d_\loc}$ is approximated by the Schr\"odinger bridge approach as in \eqref{eq:pv local}. Algorithm~\ref{alg:LSBS} summarizes the localized Schr\"odinger bridge split-step sampler (\ref{eq:update_ss_loc}). The algorithm naturally extends to the sampler (\ref{eq:propagator general local}) with localized data-aware covariance matrix (\ref{eq:cm_estimate_local}).

\begin{algorithm}[hbt!]
\caption{Localized Schr\"odinger bridge sampler}
\label{alg:LSBS}
\KwIn{Samples $\mathcal{X} \in \mathbf{R}^{d\times M}$.}
\Parameter{Bandwidth $\epsilon$. Localization dimension $d_\loc$. Desired number of new samples $N$. Number of decorrelation steps $n_c$.}
\KwOut{New samples $x_{\rm s}^{(j)}$ for $j=1,\ldots,N$.}
\BlankLine
Step 1: Construct transition Sinkhorn weights $v_\locind$\\
\For{$\loc\leftarrow 1$ \KwTo $d$}{
{\rm{construct localized data}} $\mathcal{X}_\locind$\;
construct kernel matrix $T_\loc \in \mathbf{R}^{M\times M}$ from localized data\;
construct Sinkhorn weights $v_{\loc}$ from $T_\loc$\;
}
\BlankLine
Step 2: Generate $N$ new samples $x^{(j)}_{\rm s}$ using the Sinkhorn weights $v_\loc$\\
\For{$j\leftarrow 1$ \KwTo $N$}{
each new sample is started from a random initial sample\\ 
$X(0) \gets x^{(j^\star)}$ for $1\le j^\star \le M$ and random $j^\star$\;
\For{$n\leftarrow 0$ \KwTo $n_c$}{
%construct localized initial condition $X(0)_\locind$\;
$\Xi (n) \sim {\rm N}(0,I)$\;
\For{$\loc\leftarrow 1$ \KwTo $d$}{
$X_\locind (n) \gets X(n)$ \;
$\Xi_\locind (n) \gets \Xi(n)$ \;
%$\Xi_\locind (n) \sim {\rm N}(0,I_{d_\loc})$\;
$X_\locind(n+1/2) = X_\locind(n) +\sqrt{2\epsilon}\,\Xi_\locind (n)$
\Comment*[r]{noising step}
construct vector $t_\loc(X_\locind(n+1/2)) \in \mathbf{R}^M$\;
construct conditional probability $w_\loc(X_\locind(n+1/2)) \in \mathbf{R}^M$ using $v_\loc$\,\;
{\rm{construct localized data}} $\mathcal{X}_\loc$\,\;
$X_\loc(n+1) =\mathcal{X}_\loc \,w_\loc (X_\locind(n+1/2))$
\Comment*[r]{projection step}
}
}
$x^{(j)}_{\rm s} \gets X(n_c)$;
}
\end{algorithm}

\begin{rem}
    We have assumed here a strong form of conditional independence by requesting that
    (\ref{eq:conditional mean 1}) holds for all $\epsilon >0$. In general, such a condition will be satisfied approximately for sufficiently small $\epsilon$ only. Compare the EM sampler (\ref{eq:EM sampler}), which
    provides an accurate approximation to the true transition densities $p (x'|x;\epsilon)$ of the underlying diffusion process for $\epsilon>0$ sufficiently small by ignoring higher-order dependencies. In practice, this requires a careful choice of the dependency set $\Lambda(\loc)$ which defines $x_\locind \in \mathbb{R}^{d_\loc}$.
\end{rem}

\noindent

%%%%%%%%%%%%%%%%%%%%%%%%%%%%%%%%%%%%%
%
\subsection{Algorithmic properties} \label{sec:properties}
%
%%%%%%%%%%%%%%%%%%%%%%%%%%%%%%%%%%%

We briefly discuss a few important results on the stability and ergodicity of the proposed localized Langevin samplers, which they essentially inherit from the unlocalized Schr\"odinger bridge sampler \citep{GLRY24}.

The following lemma establishes that, since each $w_\loc(x_\locind)$, $\loc = 1,\ldots,d$, is a probability vector for any $\epsilon >0$, the localized update step (\ref{eq:update_ss_loc}b) is stable. In order to simplify notations,
we denote by $m_{\rm loc}(x;\epsilon) \in \mathbb{R}^d$ the vector of localized expectation values with components $m_\loc(x_\locind)$, $\loc = 1,\ldots,d$, defined by (\ref{eq:mloc}). 

%%%%%%%%%%%%%%%%%%%%%%%%%%%%%%%%%%%%
\begin{lemma} \label{lemma1} 
Let us introduce the set $\mathcal{C}_M\subset \mathbb{R}^d$ defined by
\begin{equation}
    \mathcal{C}_M = \{x\in \mathbb{R}^d:
    |x_\loc| \le |\mathcal{X}_\loc|_\infty\}.
\end{equation}
It holds that the vector $m_{\rm loc}(x;\epsilon) \in \mathbb{R}^d$ of localized expectation value satisfies
\begin{equation} \label{eq:stability}
    m_{\rm loc}(x;\epsilon) \in \mathcal{C}_M
\end{equation}
for all choices of $\epsilon > 0$ and all $x \in \mathbb{R}^d$.
\end{lemma}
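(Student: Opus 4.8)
The plan is to reduce the componentwise claim to the elementary observation that each $m_\loc(x_\locind;\epsilon)$ is a convex combination of the sample values $\{x_\loc^{(j)}\}_{j=1}^M$. First I would verify that $w_\loc(x_\locind) \in \mathbb{R}^M$ defined in (\ref{eq:pv local}) is a probability vector for every $\epsilon > 0$ and every $x_\locind \in \mathbb{R}^{d_\loc}$. Its entries are $w_\loc^{(j)}(x_\locind) = v_\loc^{(j)} t_\loc^{(j)}(x_\locind)/(v_\loc^{\top} t_\loc(x_\locind))$; since the kernel entries (\ref{eq:tij_local2}) are strictly positive exponentials and the Sinkhorn weights $v_\loc$ are non-negative, each numerator is non-negative, while the denominator $v_\loc^{\top} t_\loc(x_\locind)$ is exactly the sum of these numerators. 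Hence $w_\loc^{(j)}(x_\locind) \ge 0$ and $\sum_{j=1}^M w_\loc^{(j)}(x_\locind) = 1$. This argument is uniform in $\epsilon$: the normalization holds identically regardless of the bandwidth.

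Next I would invoke the definition (\ref{eq:mloc}), namely $m_\loc(x_\locind;\epsilon) = \mathcal{X}_\loc \, w_\loc(x_\locind) = \sum_{j=1}^M x_\loc^{(j)} w_\loc^{(j)}(x_\locind)$, to conclude that $m_\loc$ is a convex combination of the scalars $x_\loc^{(1)}, \ldots, x_\loc^{(M)}$. A convex combination lies in the interval $[\min_j x_\loc^{(j)}, \max_j x_\loc^{(j)}]$, which yields the bound $|m_\loc(x_\locind;\epsilon)| \le \max_{j} |x_\loc^{(j)}| = |\mathcal{X}_\loc|_\infty$. Applying this for each $\loc = 1,\ldots,d$ shows that every component of $m_{\rm loc}(x;\epsilon)$ respects the corresponding box constraint defining $\mathcal{C}_M$, so that $m_{\rm loc}(x;\epsilon) \in \mathcal{C}_M$.

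I expect no genuine obstacle here: the result is a direct consequence of the probability-vector structure enforced by the Sinkhorn normalization, exactly as in the unlocalized case underlying (\ref{eq:mean}). The only point worth stressing is that localization does not weaken the bound---each local weight vector $w_\loc$ remains a bona fide probability vector over all $M$ samples, so the componentwise interval containment is preserved head by head, independently of $\epsilon$ and of the choice of the dependency sets $\Lambda(\loc)$.
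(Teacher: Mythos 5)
Your proposal is correct and follows essentially the same route as the paper's proof: the paper likewise reduces the claim to the fact that each $w_\loc(x_\locind)$ is a probability vector, so that the $\loc$-component of $m_{\rm loc}(x;\epsilon)$ given by \eqref{eq:mloc} is a convex combination of the scalars $x_\loc^{(j)}$ and hence bounded by $|\mathcal{X}_\loc|_\infty$. Your write-up simply spells out the normalization argument that the paper leaves implicit.
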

%%%%%%%%%%%%%%%%%%%%%%%%%%%%%%%%%

%%%%%%%%%%%%%%%%%%%%%%%%%%%%%%%
\begin{proof}
    The lemma follows from the fact that the $\loc$-component of $m_{\rm loc}(x;\epsilon)$ is given by~\eqref{eq:mloc} and the fact that $w_\loc (x_\locind)$ is a probability vector for all $\epsilon>0$ and all $x \in \mathbb{R}^d$.
\end{proof}
%%%%%%%%%%%%%%%%%%%%%%%%%%%%%%

\noindent
Lemma \ref{lemma1} also establishes stability of the general Langevin sampler defined by (\ref{eq:propagator_general}) with localized $m_{\rm loc}(x;\epsilon)$ and $S(x;\epsilon) = 2\epsilon I$, i.e.,
\begin{equation} \label{eq:propagator_local}
X(n+1) = m_{\rm loc}(X(n);\epsilon) + \sqrt{2\epsilon}\,
\Xi(n),\qquad \Xi(n) \sim {\rm N}(0,I),
\end{equation}
for all step sizes $\epsilon>0$. Note that $X(n+1)$ is no longer in the convex hull of the data as the original unlocalized Schr\"odinger bridge (cf. \eqref{eq:update_ss}b), but instead is confined to $\mathcal{C}_M$ in expectation. The exact gradient structure of the conditional expectation value (\ref{eq:gradient structure}) does no longer hold for the localized $m_{\rm loc}(x;\epsilon)$. The next lemma shows that the localized sampler (\ref{eq:propagator_local}) remains geometrically ergodic.

%%%%%%%%%%%%%%%%%%%%%%%%%%%%%
\begin{lemma} \label{lemma2} 
Let us assume that the data generating density $\pi$ has compact support. Then the localized time-stepping method (\ref{eq:propagator_local}) possesses a unique invariant measure and is geometrically ergodic.
\end{lemma}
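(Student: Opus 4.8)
The plan is to verify the standard Foster--Lyapunov geometric drift condition together with a minorization on a compact small set, and then invoke the Meyn--Tweedie theory of geometrically ergodic Markov chains. The crucial structural input is Lemma~\ref{lemma1}: because the localized conditional mean $m_{\rm loc}(x;\epsilon)$ takes values in the bounded box $\mathcal{C}_M$ for every $x \in \mathbb{R}^d$, the quantity $R := \sup_{x \in \mathbb{R}^d} \|m_{\rm loc}(x;\epsilon)\|$ is finite. Since the update (\ref{eq:propagator_local}) adds independent Gaussian noise, its one-step transition kernel is $P(x,\cdot) = {\rm N}(m_{\rm loc}(x;\epsilon),\, 2\epsilon I)$, which admits the strictly positive Lebesgue density $p(x,y) = (4\pi\epsilon)^{-d/2}\exp\!\left(-\|y - m_{\rm loc}(x;\epsilon)\|^2/(4\epsilon)\right)$.

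First I would establish the drift condition using the Lyapunov function $V(x) = 1 + \|x\|^2$. A direct computation exploiting $\mathbb{E}[\Xi(n)] = 0$ and $\mathbb{E}[\|\Xi(n)\|^2] = d$ gives $(PV)(x) = 1 + \|m_{\rm loc}(x;\epsilon)\|^2 + 2\epsilon d \le 1 + R^2 + 2\epsilon d =: K$, a bound uniform in $x$. Consequently, for any fixed $\gamma \in (0,1)$ one has $(PV)(x) \le \gamma V(x) + K\,\mathbf{1}_C(x)$ with the sublevel set $C = \{x : V(x) \le K/\gamma\}$, which is compact: on $C$ the right-hand side is at least $K$, and off $C$ it exceeds $K$. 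Thus the uniform boundedness of the drift furnished by Lemma~\ref{lemma1} makes the geometric drift condition essentially automatic.

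Next I would confirm irreducibility, aperiodicity, and that compact sets are small. Because $p(x,y) > 0$ for all $x,y$, the chain is $\psi$-irreducible with respect to Lebesgue measure and aperiodic. For the minorization, observe that for $x \in C$ and $y$ in any fixed ball $B_r$ the exponent obeys $\|y - m_{\rm loc}(x;\epsilon)\|^2 \le (r+R)^2$, whence $p(x,y) \ge \delta := (4\pi\epsilon)^{-d/2}\exp\!\left(-(r+R)^2/(4\epsilon)\right) > 0$ and therefore $P(x,\cdot) \ge \delta\,|B_r|\,\phi(\cdot)$ for all $x \in C$, where $\phi$ is the uniform probability measure on $B_r$. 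This identifies $C$ as a small set. The geometric drift condition, the small set, and $\psi$-irreducibility with aperiodicity together satisfy the hypotheses of the Meyn--Tweedie geometric ergodicity theorem, yielding a unique invariant measure together with geometric convergence of $P^n(x,\cdot)$ in $V$-norm.

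I expect the argument to be routine rather than hard, precisely because Lemma~\ref{lemma1} does the heavy lifting: the deterministic part of the update never escapes the bounded set $\mathcal{C}_M$, so no genuine tail estimate is needed. The only points requiring care are bookkeeping ones --- confirming that $\mathcal{C}_M$ is genuinely bounded (which holds since the finitely many samples have finite entries, reinforced by the compact-support assumption on $\pi$, under which the samples lie in a bounded region almost surely) and citing the precise Meyn--Tweedie statement. In fact, since the lower bound $p(x,y) \ge \delta$ on $B_r$ holds for \emph{all} $x \in \mathbb{R}^d$ rather than only on $C$, the chain satisfies a global Doeblin minorization and is thus \emph{uniformly} ergodic, a conclusion strictly stronger than the stated geometric ergodicity; I would present the drift-plus-small-set route as the main proof to parallel the unlocalized treatment in \cite{GLRY24} and remark on the uniform bound as a bonus.
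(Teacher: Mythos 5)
Your proof is correct and follows essentially the same route as the paper's: a Foster--Lyapunov drift condition with a quadratic Lyapunov function, made essentially automatic by the uniform boundedness of $m_{\rm loc}(x;\epsilon)$ from Lemma~\ref{lemma1}, combined with a Gaussian density lower bound identifying a compact small set, and an appeal to the Meyn--Tweedie geometric ergodicity theorem. Your closing observation that the minorization in fact holds for all $x\in\mathbb{R}^d$, so the chain is uniformly ergodic, is a valid strengthening that the paper does not state.
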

%%%%%%%%%%%%%%%%%%%%%%%%%%%%%

%%%%%%%%%%%%%%%%%%%%%%%%%%%%%%%%
\begin{proof} Consider the Lyapunov function $V(x) = \|x\|^2$ and introduce the ball
\begin{equation}
\mathcal{B}_R = \{ x \in \mathbb{R}^d: \|x\| \le R\}
\end{equation}
of radius $R>0$. Since $m_{\rm loc}(x;\epsilon) \in \mathcal{C}_M$ and $\pi$ has compact support, one can find a radius $R>0$, which is independent of the training data $\mathcal{X}$, such that $\mathcal{C}_M \subset \mathcal{B}_R$ and
\begin{equation}
    \mathbb{E}[V(X(n+1))|X(n)] \le \lambda V(X(n))
\end{equation}
for all $X(n) \notin \mathcal{B}_R$ with $0\le \lambda < 1$. Furthermore, because of the additive Gaussian noise in (\ref{eq:propagator_local}), there is a constant $\delta>0$ such that
\begin{equation}
{\rm n}(x';m_{\rm loc}(x;\epsilon),2\epsilon I) \ge \delta 
\end{equation}
for all $x,x'\in \mathcal{B}_R$. Here ${\rm n}(x;m,C)$ denotes the Gaussian probability density function with mean $m$ and covariance matrix $C$. In other words, $\mathcal{B}_R$ is a small set in the sense of \cite{MeynTweedy}. Geometric ergodicity follows from Theorem 15.0.1 in \cite{MeynTweedy}. See also the self-contained presentation in \cite{MSH02}.
\end{proof}

\begin{rem}
We emphasize that, contrary to the unlocalized Schr\"odinger bridge sampler, the localized $m_{\rm loc}(x;\epsilon)$ is not restricted to the linear subspace of $\mathbb{R}^d$ spanned by the training data $\mathcal{X} \in \mathbb{R}^{d\times M}$ in case $M < d$. The localized sampler shares this desirable property  with the localized EnKF \citep{reich2015probabilistic,asch2016data,Evensenetal2022}.
\end{rem}

%%%%%%%%%%%%%%%%%%%%%%%%%%%%%%%%%%%%%%%%%%%%%%%%%%%%%%%%%%%
%
\subsection{Localized kernel-denoising} \label{sec:local KDE}
%
%%%%%%%%%%%%%%%%%%%%%%%%%%%%%%%%%%%%%%%%%%%%%%%%%%%%%%%%%%%%
Localizing the KDE-based denoiser (\ref{eq:KDE drift}) follows along the same lines. We first note that a component-wise formulation of Tweedie's formula (\ref{eq:tweedie}) leads to
\begin{equation}
\partial_{x_\loc} \log \pi_\epsilon (x) = -\frac{1}{\epsilon} \left(x_\loc - \mathbb{E}[x_\loc'|x]\right),
\end{equation}
$\alpha = 1,\ldots,d$. Upon assuming the conditional independence relation
\begin{equation} \label{eq:cir}
\mathbb{E}[x_\loc'|x] = \mathbb{E}[x_\loc'|x_\locind]
\end{equation}
one finds that 
\begin{equation}
    \partial_{x_\loc} \log \pi_\epsilon (x)= \partial_{x_\loc} \log \pi_\epsilon (x_\locind).
\end{equation}
We approximate the restricted density $\pi_\epsilon (x_\locind)$ by the localized KDE estimator
\begin{equation}
\tilde \pi_\epsilon (x_\locind) \propto
\sum_{j=1}^M \exp \left(-\frac{1}{2\epsilon}\|x_\locind - x_\locind^{(j)}\|^2\right),
\end{equation}
which results in
\begin{equation}
\partial_{x_\loc} \log \pi_\epsilon (x) \approx -\frac{1}{\epsilon} \left( x_\loc - \sum_{j=1}^M
x_\loc^{(j)}\,{\tilde{w}}_\loc^{(j)}(x_\locind)\right)
\end{equation}
with localized weights
\begin{equation}
\tilde w_\loc^{(j)}(x_\locind) := \frac{\exp \left(-\frac{1}{2\epsilon}\|x_\locind - x_\locind^{(j)}\|^2\right)}{\sum_{j=1}^M \exp \left(-\frac{1}{2\epsilon}\|x_\locind - x_\locind^{(j)}\|^2\right)}
\end{equation}
for $j=1,\ldots,M$. We collect these weights in the vector $\tilde w_\loc(x;\epsilon)\in \mathbb{R}^M$.
One finally obtains the following localized KDE update step for $X_\loc(n+1/2)$:
\begin{equation} \label{eq:localized kernel denoising}
    X_\loc(n+1) = D_\loc(X(n+1/2);\epsilon) := 
    \mathcal{X}_\loc \,\tilde w_\loc(X_\locind(n+1/2)), \qquad \alpha = 1,\ldots,d,
\end{equation}
which can be employed whenever (\ref{eq:cir}) holds to sufficient accuracy.

%%%%%%%%%%%%%%%%%%%%%%%%%%%%%%%%%%%%%%%
%
\section{Localised Schr\"odinger bridge sampler for temporal stochastic processes}
\label{sec:temporal}
%
%%%%%%%%%%%%%%%%%%%%%%%%%%%%%%%%%%%%%%

In this section, we consider temporal stochastic processes $Z(t_k) \in \mathbb{R}^s$ with $t_k = k \Delta t$ and $k=0,\ldots,K$, and assume that $M$ realizations 
\begin{equation}
    x^{(j)} = \{Z^{(j)}(t_k)\}_{k=0}^K \in \mathbb{R}^d,\qquad d = (K+1)s,
\end{equation}
$j = 1,\ldots,M$, of such a process have become available. We furthermore assume that the generating process is Markovian, i.e., $Z(t_{k+1})$ is conditionally independent of all $Z(t_{l})$ with $l<k$ and $l>k+1$. Such a setting provides a perfect application of the localization strategy proposed in Section \ref{sec:lsbs}. More specifically, we obtain  the following subsets for localization in terms of the entries $x_\loc$ of the augmented state vector $x \in \mathbb{R}^d$: $\Lambda(\loc) = \{1,\ldots,s\}$ for $\loc \in \{1,\ldots,s\}$ and
\begin{equation}
\Lambda(\loc) = \{sl+1,\ldots,s(l+2)\}
\end{equation}
for $\loc \in \{s(l+1)+1,\ldots,s(l+2)\}$ and $l=0,\ldots,K-2$.

As a numerical illustration we consider the bimodal stochastic differential equation (SDE)  
\begin{equation} \label{eq:bistable LD}
\frac{d}{dt} Z(t) = -Z(t)^3 + Z(t) + \sqrt{0.2}
\frac{d}{dt}B(t), \qquad Z(0) \sim {\rm N}(0,1),
\end{equation}
where $B(t)$ denotes standard Brownian motion. Here $s=1$ and we sample solutions in time-intervals of length $\Delta t = 5$ over $K=100$ intervals; hence the dimension of the augmented state vector $x \in \mathbb{R}^d$ becomes $d = 101$. The training data consists of $M = 1,000$ independent realizations, which were obtained with a small step-size EM algorithm applied to (\ref{eq:bistable LD}).

Results from the localized Schr\"odinger bridge split-step sampler \eqref{eq:update_ss_loc} with constant diffusion $S(x;\epsilon) = 2\epsilon I$ with $\epsilon = 0.0025$ and $N = 25,000$ generated samples can be found in Figures \ref{fig:proc1} and \ref{fig:proc2}, respectively.
The numerical results demonstrate that the localized Schr\"odinger bridge sampler can successfully generated samples for this rather high-dimensional ($d = 101$) and nonlinear problem given only $M=1,000$ training samples. 

We also implemented the localized KDE-based denoiser (\ref{eq:localized kernel denoising}). The results are virtually indistinguishable from the results obtained from the localized Schr\"odinger bridge sampler. 

\begin{figure}[!htp]
    \centering
    \includegraphics[scale=0.4]{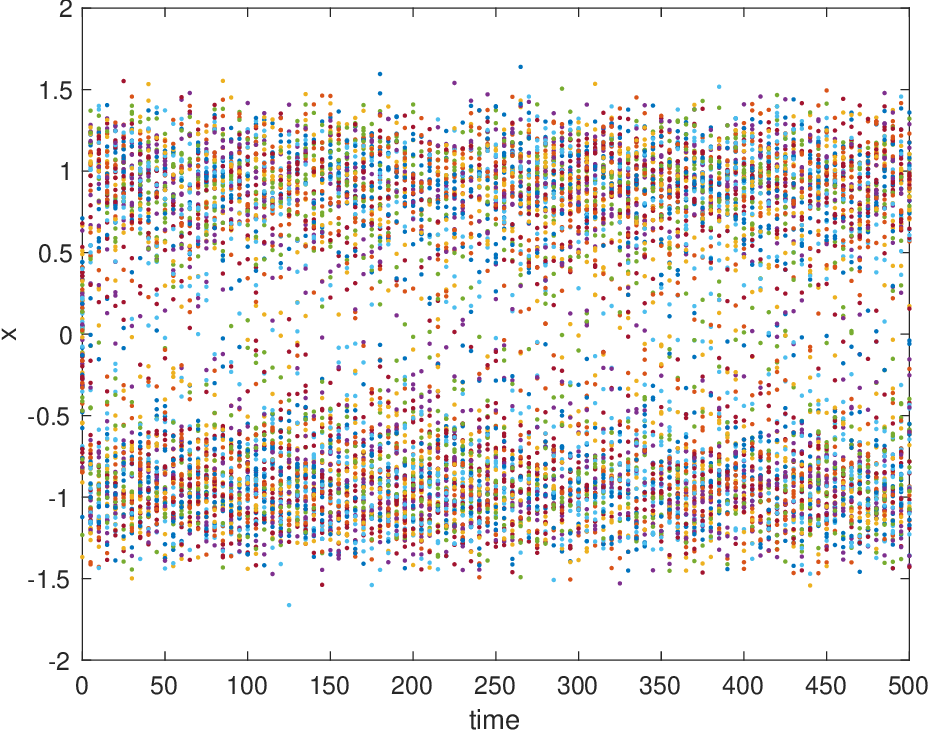}$\qquad$
    \includegraphics[scale=0.4]{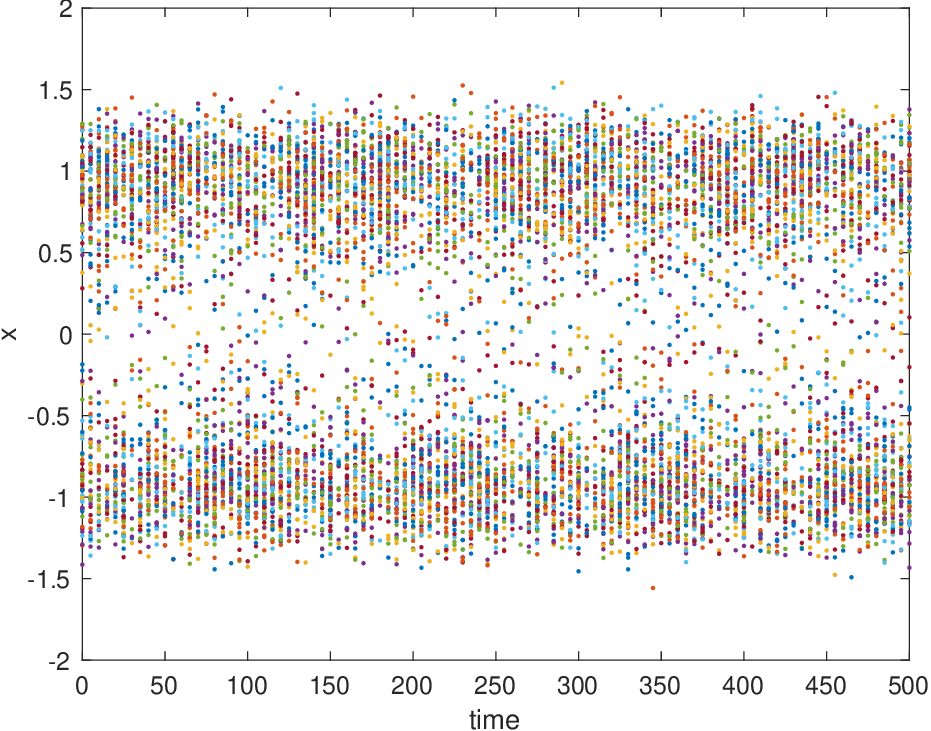}
    \caption{Generated trajectories for the bimodal SDE \eqref{eq:bistable LD} using the localized Schr\"odinger bridge split-step sampler with constant diffusion \eqref{eq:update_ss_loc}. Left panel: $100$ trajectories out of $M=1,000$ training samples; right panel: $100$ trajectories out of $N=25,000$ generated samples. The computed transition rates (relative number of sign changes along trajectories) agree well with $9\%$ for the training data and $11\%$ for the generated data.} \label{fig:proc1}
\end{figure}

\begin{figure}[!htp]
    \centering
    \includegraphics[scale=0.45]{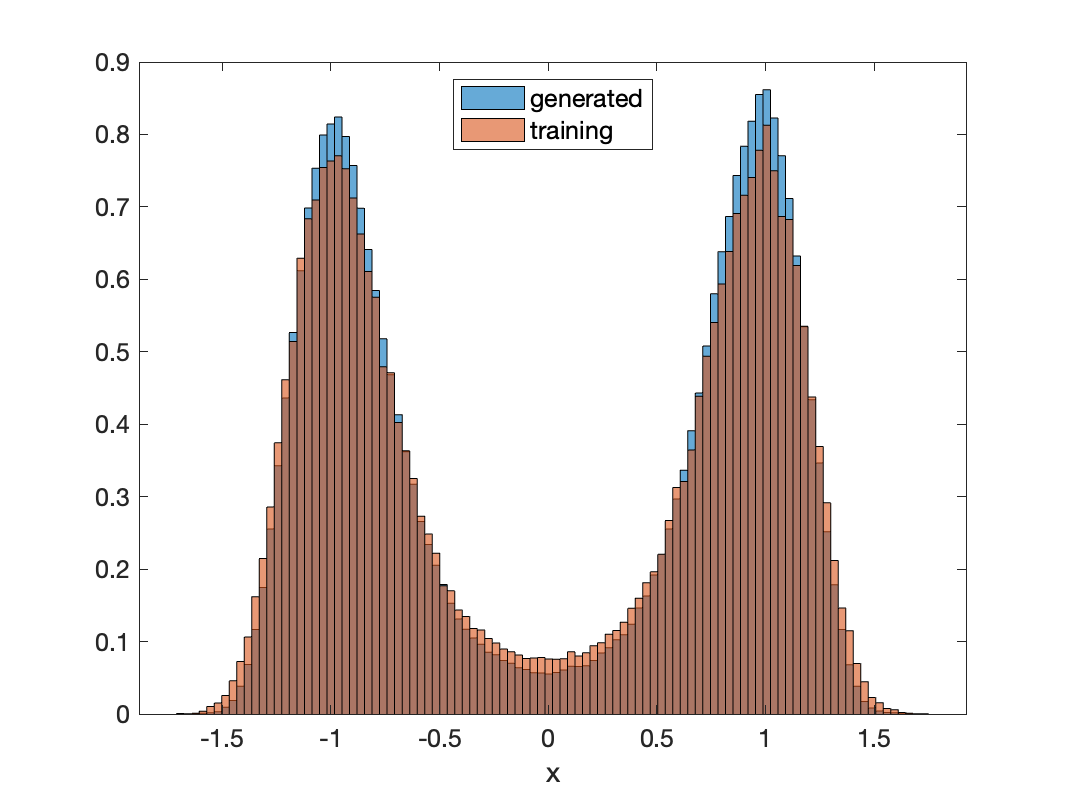}
    \caption{Normalized empirical histograms of training and generated data for the bimodal SDE \eqref{eq:bistable LD} using the localized Schr\"odinger bridge split-step sampler with constant diffusion \eqref{eq:update_ss_loc}. We show results over all $1,000$ training and $25,000$ generated data points. The invariant distribution of the bimodal SDE is well reproduced by the generated data; the dispersion of the generated data in each of its two modes being slightly smaller than the one from the training data, which has also been observed for the split-step scheme in Figure \ref{fig:multiGauss2}.} \label{fig:proc2}
\end{figure}

%%%%%%%%%%%%%%%%%%%%%%%%%%%%%%%%%%%%%%%
%
\section{Conditional localized Schr\"odinger bridge sampler}
\label{sec:cond}
%
%%%%%%%%%%%%%%%%%%%%%%%%%%%%%%%%%%%%%%

As for the standard Schr\"odinger bridge sampler \cite{GLRY24} the localized sampler lends itself to conditional sampling. Consider samples $x^{(j)}=(z^{(j)},\psi^{(j)})$ for $j=1,\dots,M$. The localized Schr\"odinger bridge sampler described in Section~\ref{sec:lsbs} and Algorithm~\ref{alg:LSBS} allows us to learn the joint probability measure $\nu({\rm d}z,{\rm d}\psi)$. To draw samples from the conditional probability measure $\nu({\rm d}\psi|z)$ we may use the localized conditional probability vector $t_\loc(x_\locind)$ and the Sinkhorn weights $v_\loc$ obtained from the samples $x^{(j)}$ , i.e., executing lines 1-6 in Algorithm~\ref{alg:LSBS}. Conditional sampling is achieved by ensuring that at each sampling step the $z$-component of the generated samples $X(n)$ are set to the value $z^\ast$ on which we wish to condition. To achieve this we add the conditioning assignment, $Z(n) \gets z^\ast$, between lines 11 and 12 of Algorithm~\ref{alg:LSBS}. We demonstrate the performance of the conditional sampler for the multi-scale Lorenz-96 model in the next subsection.  

%%%%%%%%%%%%%%%%%%%%%%%%%%%%%%%%%%%%%%%%%%%%%%%
%
\subsection{Conditional sampling for a closure problem } 
%
%%%%%%%%%%%%%%%%%%%%%%%%%%%%%%%%%%%%%%%%%%%%%%%%%
We apply the conditional localized Schr\"odinger bridge sampler to the multi-scale Lorenz-96 model 
for $K$ slow variables $z_k$ which are each coupled to $J$ fast variables $y_{j,k}$ and evolve according to
\begin{subequations}\label{eq:L96}
\begin{align}
\frac{d}{dt}z_k &= -z_{k-1}(z_{k-2}-z_{k+1})-z_k+F-\frac{hc}{b}\sum\limits_{j=1}^{J}y_{j,k}, \\
\frac{d}{dt}y_{j,k} &= -cby_{j+1,k}(y_{j+2,k}-y_{j-1,k})-cy_{j,k}+\frac{hc}{b}z_k  
\end{align}
\end{subequations}
with periodic boundary conditions $z_{k+K}=z_k$, $y_{j,k+K}=y_{j,k}$ and $y_{j+J,k} = y_{j,k+1}$. This $d=K(J+1)$-dimensional model was introduced as a caricature for the mid-latitude atmospheric dynamics \cite{Lorenz96}. The degree of time-scale separation is controlled by the parameter $c$. The ratio of the amplitudes of the large-scale variables $z_k$ and the small-scale variables $y_{j,k}$ is controlled by the parameter $b$. The slow and fast dynamics are coupled with coupling strength $h$. The parameter $F$ denotes external forcing. As equation parameters we choose $K=12$ and $J=24$, i.e. $d=300$, and $F=20$, $c=b=10$ and $h=1$ as in \cite{Wilks05,ArnoldEtAl13,GottwaldReich21a}. These parameters lead to chaotic dynamics with a maximal Lyapunov exponent of $\lambda_{\rm{max}}\approx 18.29$ in which the fast variables experience temporal fluctuations which are $10$ times faster and $10$ times smaller than those of the slow variables. This corresponds to the regime of strong coupling in which the dynamics is driven by the fast sub-system \citep{HerreraEtAl10}.

In the climate science and other disciplines one is typically only interested in the slow large-scale dynamics. A direct simulation of the multi-scale system \eqref{eq:L96}, however, requires a small time step adapted to the fastest occurring time scale, making long term integration to resolve the slow dynamics computationally infeasible. Scientists hence aim to design a computationally tractable model for the slow variables only in which the effect of the fast dynamics is parameterized. This is the so called closure or subgrid-scale parameterization problem. In particular, we seek a model of the form
\begin{align}
\frac{d}{dt}z_k &=G_{k}(z) + \psi_k(z),
\label{eq:L96_closure}
\end{align}
for $z = (z_1,z_2, \ldots, z_K)$ with $G_{k}(z)= -z_{k-1}(z_{k-2}-z_{k+1})-z_{k}+F$. We assume that scientists have prior physics-based knowledge about the resolved vector field $G_{k}(z)$ but lack knowledge of the closure term $\psi_k(z)$ which parametrizes the effect of the fast unresolved dynamics. The closure term may be deterministic or stochastic, depending on the choice of equation parameters in \eqref{eq:L96}. We will employ the localized Schr\"odinger bridge sampler to generate samples of the closure term $\psi(z)$ conditioned on the current model state $z(t)$. The sampler will be trained on $M$ samples $x^{(j)}=(z^{(j)},\psi^{(j)})$, $j=1,\ldots , M$, which consists of a time series with $x^{(j)}=x(t_j)$ with $t_j = j \Delta t$ and $\Delta t = 5\times  {10^{-3}}$.

We obtain $M=40,000$ samples $z^{(j)} \in \mathbb{R}^{K}$ by integrating \eqref{eq:L96} using a fourth-order Runge--Kutta method with a fixed time step $\delta t = 5\times  {10^{-4}}$ and collecting the state in time intervals of $\Delta t = 10\,\delta t$. Samples of the closure term $\psi^{(j)} \in \mathbb{R}^K$ are then determined from the samples $z^{(j)}$ 
%by employing an Euler discretization of the closure model (\ref{eq:L96_closure})
via
\begin{align}
\psi^{(j)} := \frac{z^{(j+1)} - z^{(j)}}{\Delta t} - G(z^{(j)}) , 
\end{align}
for $j=1,\cdots,M-1$. %Due to the symmetry of the system \eqref{eq:L96_X}--\eqref{eq:L96_Y}, the distribution of the $\psi_k$ is the same for all $k=1,\ldots,K$.  
This defines $M-1$ samples $x^{(j)}=(z^{(j)},\psi^{(j)}) \in \mathbb{R}^{2K}$ for $j=1,\ldots,M-1$ to be used to train the Schr\"odinger bridge sampler. 

To numerically integrate the closure model (\ref{eq:L96_closure}) in terms of the state vector $z \in \mathbb{R}^K$ we employ an Euler discretization
\begin{align}
z(m+1) = z(m) + (G(z(m))  + \psi(m|z(m))) \, \Delta t
\label{eq:L96_closure_EM}
\end{align}
with a time step $\Delta t$. At each time step $m\ge0$ we generate a sample $\psi(m|z(m))$ conditioned on the current state $z(m)$. These samples should be uncorrelated to the samples drawn at the previous time step. This is achieved by running the localized Schr\"odinger bridge sampler conditioned on $z^\ast = z(m)$ at each time step $m$ for $n_c=100$ decorrelation steps (cf.~Algorithm~\ref{alg:LSBS}).

For the localized Schr\"odinger bridge sampler we employ a parameter of $\epsilon=0.1$ and consider a nearest neighbor localization with $\Lambda(\loc)=\{\loc-1,\loc,\loc+1,\loc, K+\loc-1,K+\loc,K+\loc+1\}$ with the obvious periodic extensions for $\loc=1$ and $\loc=d$. To account for the varying ranges of $z$ and $\psi$ when estimating the matrices \eqref{eq:tij_local} and \eqref{eq:tij_local2} for fixed parameter $\epsilon$, we replace the standard Euclidean product with a scaled one where we divide the inner product in the $z$-variables by $\sigma_z$ and the $\psi$-variables by $\sigma_\psi$, where $\sigma_\psi^2$ and $\sigma_x^2$ denote the climatic variances of the slow variables and the closure term, respectively, estimated from the samples $x^{(j)}$. 

Figures~\ref{fig:L96} and \ref{fig:L96_2} show a comparison of the outputs of the localized Schr\"odinger bridge sampler with data obtained from simulating the full multi-scale Lorenz-96 system \eqref{eq:L96}. We show results for the covariance of the slow variables $z$, obtained from the samples $z^{(j)}$ of the full multi-scale Lorenz-96 system \eqref{eq:L96}, and of the discretization of the closure scheme \eqref{eq:L96_closure_EM}. 
%over $40,000$ time steps with time step $\Delta t$. Samples $\psi(m|z(m))$ are generated at each time step using the conditional sampling algorithm. 
We show in Figure~\ref{fig:L96} a comparison of the empirical histograms of $z$ obtained by integrating the closure model \eqref{eq:L96_closure_EM} with the original samples $\{z^{(j)}\}_{j=1}^M$ which were obtained from a simulation of the full multi-scale Lorenz-96 system \eqref{eq:L96}. The non-stiff trained stochastic closure model \eqref{eq:L96_closure_EM} is able to reproduce the actual histogram well. We further show a scatter plot of the stochastic closure term $\psi$ obtained from the full Lorenz-96 system \eqref{eq:L96} and obtained from the localized conditional Schr\"odinger bridge. The closure term is well represented by the localized conditional Schr\"odinger bridge. In Figure~\ref{fig:L96_2} we show the entries of the rows of the empirical covariance matrix, centered about $k=6$ employing periodicity of the system. It is seen that our localized sampler reproduces the covariance structure of the full system very well. We further show that the temporal autocorrelation structure of the Lorenz-96 system is well reproduced by the localized conditional Schr\"odinger bridge. 

\begin{figure}[!htp]
    \centering
    \includegraphics[scale=0.35]{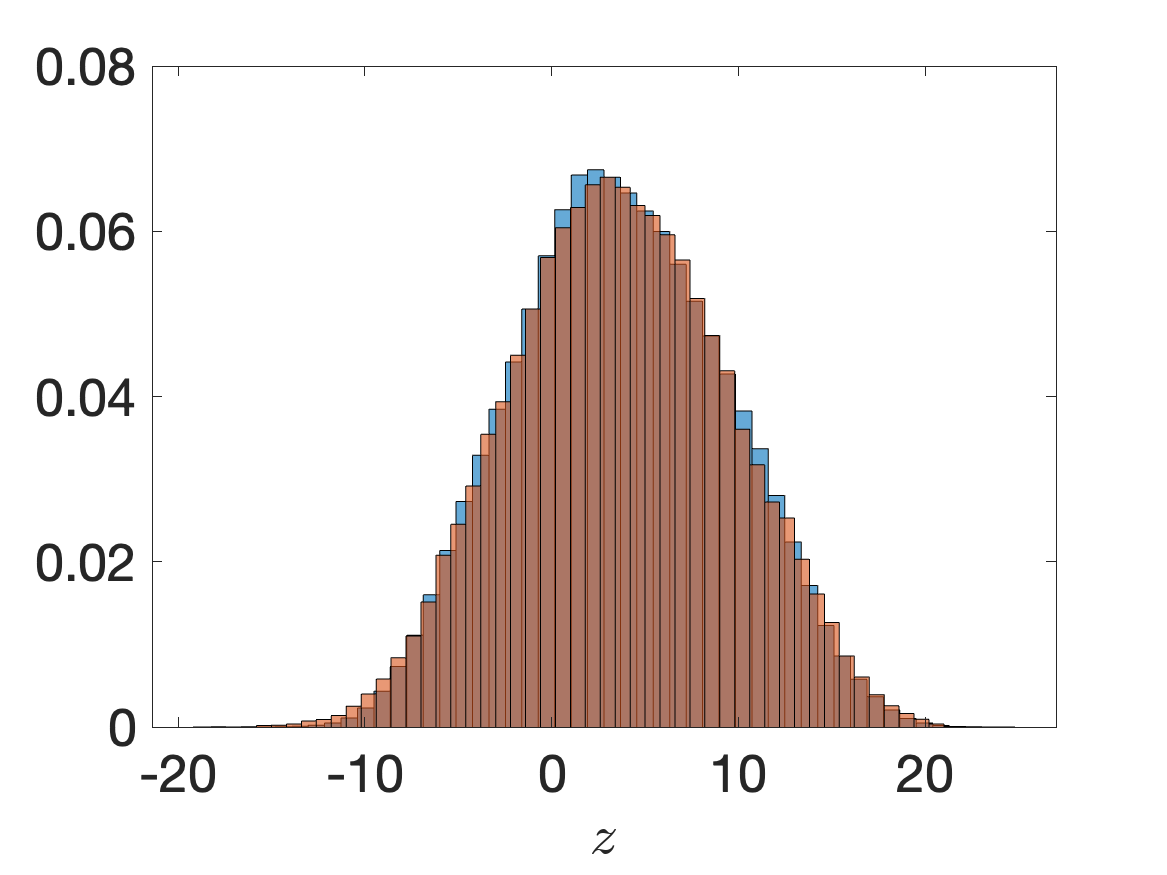}
    \includegraphics[scale=0.35]{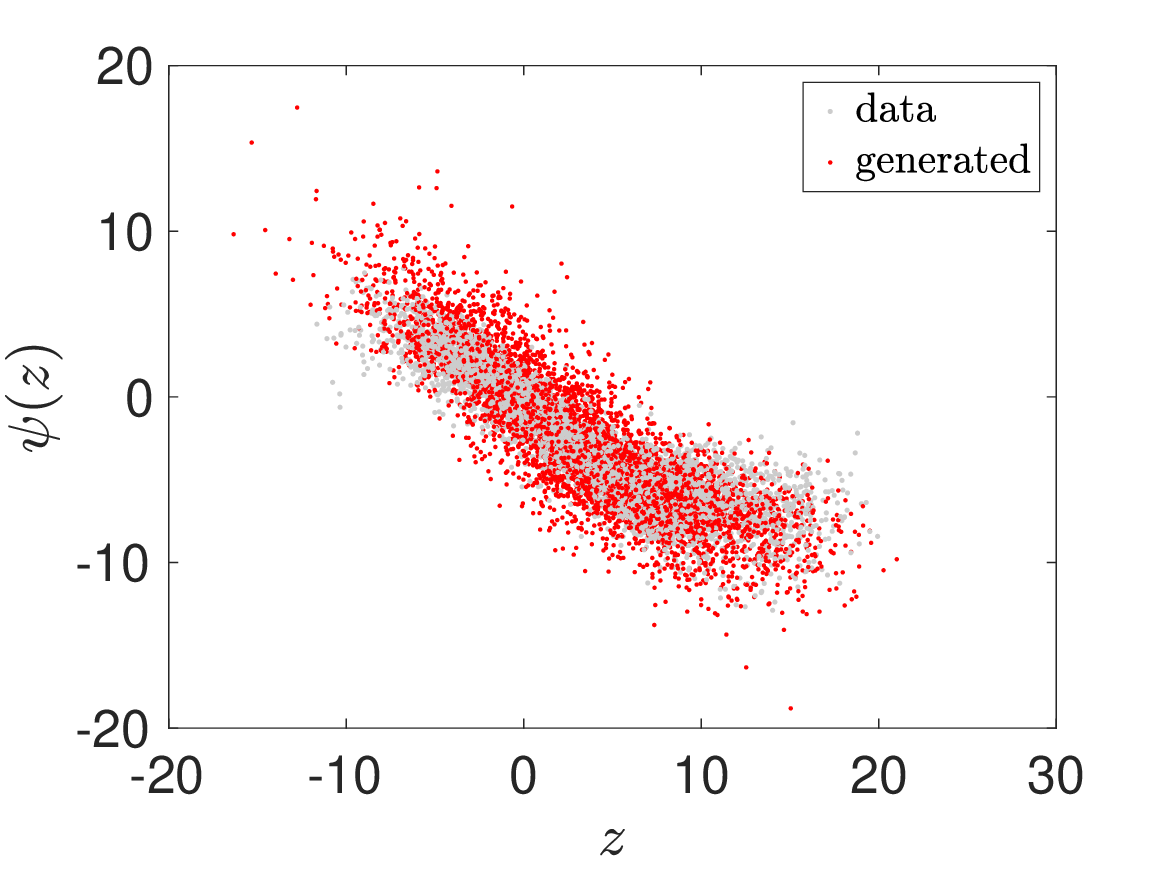}
    \caption{Comparison of the samples obtained from the localized Schr\"odinger bridge sampler and given samples drawn from the multi-scale Lorenz-96 system \eqref{eq:L96} using nearest neighbor localization with $\Lambda(\loc)=\{\loc-1,\loc,\loc+1\, K+\loc-1,K+\loc,K+\loc+1\}$ with the obvious periodic extensions for $\loc=1$ and $\loc=d$. We consider $40,000$ new and given samples. Left: Empirical histograms. Right: Scatter plot of the closure term $\psi$ as a function of $z$.}
    \label{fig:L96}
\end{figure}
\begin{figure}[!htp]
    \centering
    \includegraphics[scale=0.35]{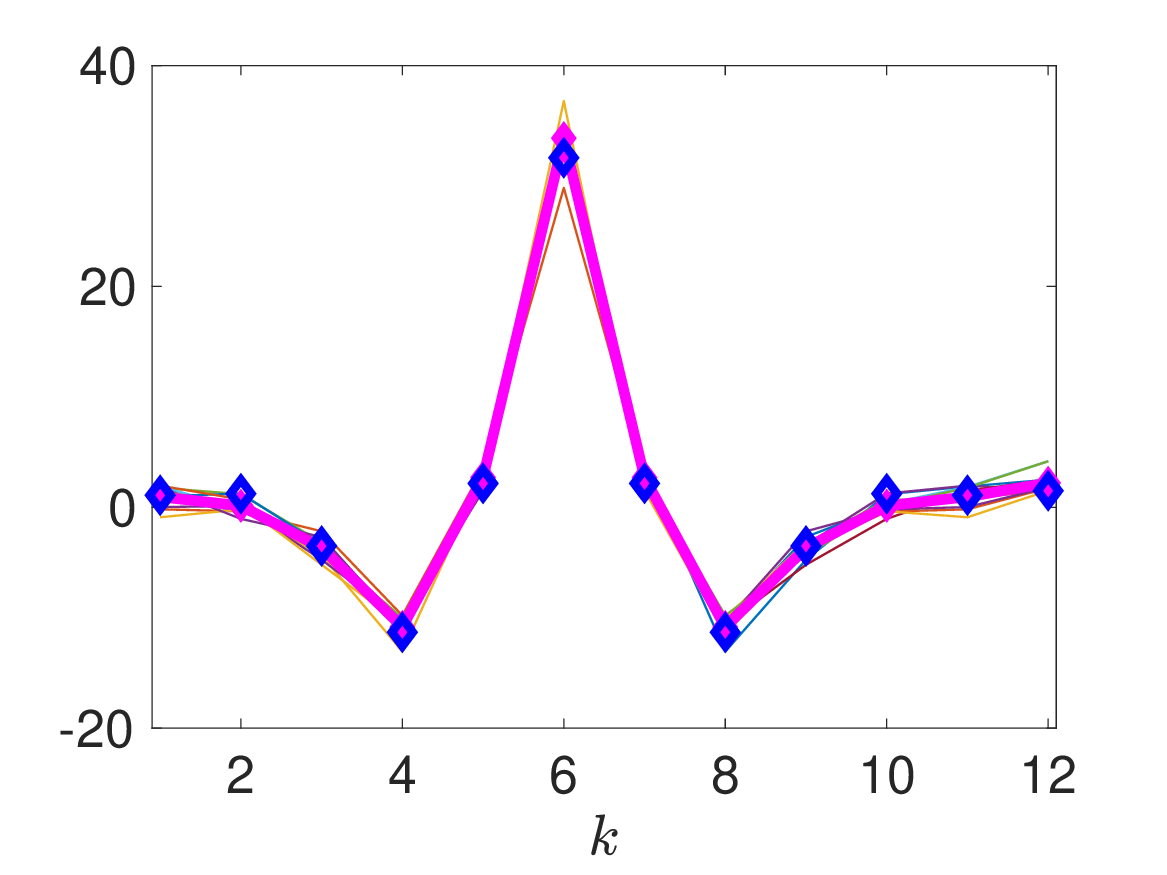}
    \includegraphics[scale=0.35]{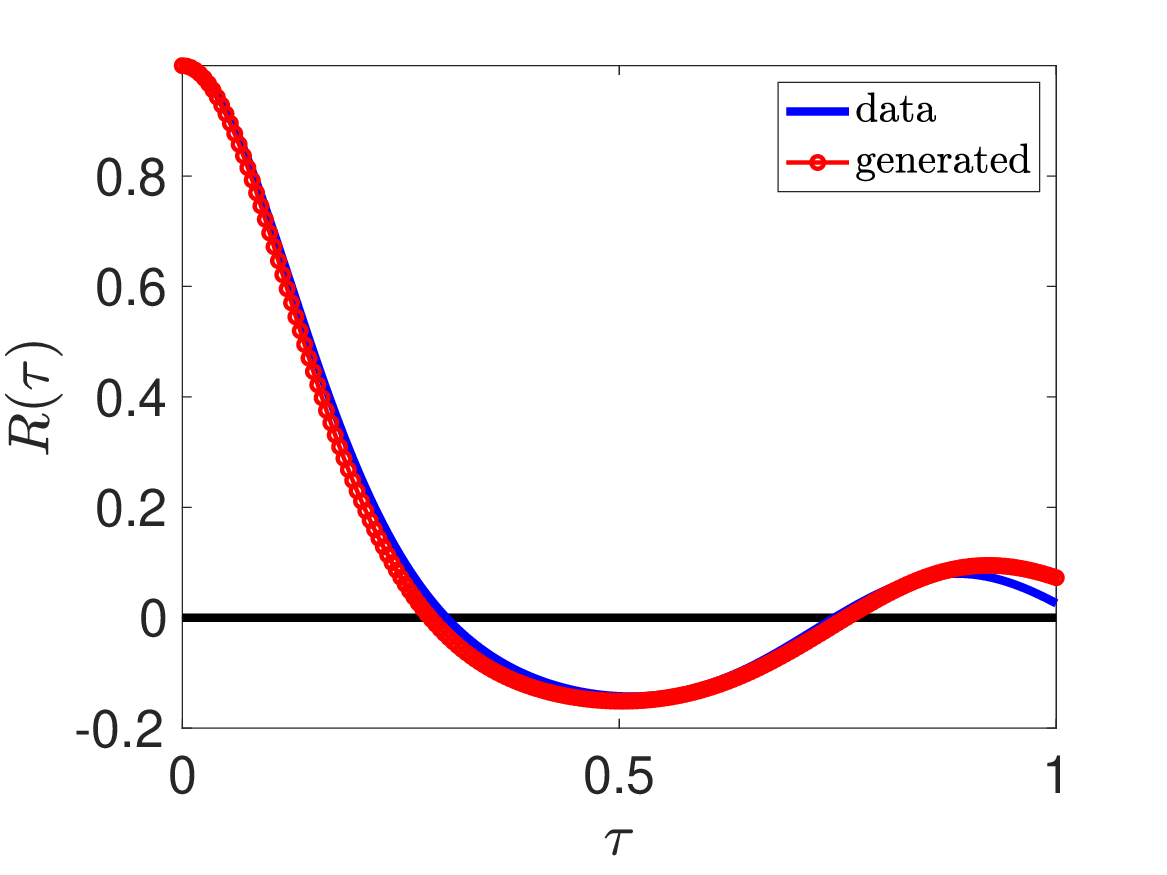}
    \caption{Comparison of the samples obtained from the localized Schr\"odinger bridge sampler and given samples drawn from the multi-scale Lorenz-96 system \eqref{eq:L96} using nearest neighbor localization with $\Lambda(\loc)=\{\loc-1,\loc,\loc+1\, K+\loc-1,K+\loc,K+\loc+1\}$ with the obvious periodic extensions for $\loc=1$ and $\loc=d$. We consider $40,000$ new and given samples. Left: Centered rows of the empirical covariance matrix for $z$. The magenta line denotes the mean over all rows. The blue markers denote the empirical covariance for the given samples. Right: Autocovariance function $R(\tau)$. }
    \label{fig:L96_2}
\end{figure}
%

%%%%%%%%%%%%%%%%%%%%%%%%%%%%%%%%%%
%
\section{Conclusions} \label{sec:con}
%
%%%%%%%%%%%%%%%%%%%%%%%%%%%%%%%

The construction  of the previously proposed Schr\"odinger bridge sampler \cite{GLRY24} is fraught with an unfavorable dependency in the dimension $d$. The required number of samples scales for a desired accuracy exponentially on the underlying intrinsic dimensionality of the data \cite{WR20}. We have shown here that for data which satisfy conditional independence one can successfully employ localization to express the single Schr\"odinger bridge problem for  $d$-dimensional data to $d$ localized Schr\"odinger bridge problems of smaller size $d_\loc \ll d$. The localized Schr\"odinger bridge sampler can be used to generate samples from an unknown prior and readily lends itself to conditional sampling and Bayesian inference. 

We have numerically demonstrated the advantage of localization for several examples. We considered a Gaussian distribution for which the inverse covariance matrix has tri-diagonal structure, a bimodal SDE and a conditional sampling problem of determining a closure term in a nonlinear multi-scale system.

We have established theoretically that the proposed sampler is stable and geometric ergodic under relatively mild conditions. The stability of our sampler allows for applications to data drawn from a singular measure which arise when data are concentrated on a lower-dimensional manifold. This sets it apart from score-generative models which rely on Tweedie's formula and the differentiability of a regularized measure. 

We have established several connections with other sampling strategies. The Schr\"odinger bridge sampler was shown to be closely related to kernel-based denoising. The Schr\"odinger bridge sampler, however, has the advantage that it can employ a data-aware noising step, which was demonstrated to be advantageous in Section \ref{sec:Gauss}, and can be constructed using a variable bandwidth \cite{GLRY24}, which is desirable with training data that involve data-sparse regions in the state space. Further, while this work has focused on overdamped Langevin dynamics as a mean of sampling from a distribution, the methodology generalizes to more general formulations of score-generative and diffusion modeling \cite{diffusion1,diffusion2,diffusion3,diffusion4} and transformers \cite{Transformer,SABP22}. We have shown that the conditional mean of a Schr\"odinger bridge sampler is formally akin to self-attention in transformer architectures and that localization naturally leads to multi-head self attention. It will be interesting to further explore these connections. 

The framework of localized Schr\"odinger bridges lends itself to numerous applications. In particular, we mention here sequential data assimilation \cite{reich2015probabilistic,asch2016data,Evensenetal2022}, feedback particle filter and homotopy methods \citep{SR-meyn13,reich2011dynamical,PR21}, which are implemented utilizing Schr\"odinger bridges, and interacting particle sampling methods, which rely on grad-log density estimators such as (\ref{eq:loss}) \citep{MRO20}. Finally, the proposed localized conditional estimator $m_{\rm loc}(x)$ as well as its KDE-based variant (\ref{eq:localized kernel denoising}) could be of independent interest for MMSE denoising \cite{MD24}.

\smallskip
\smallskip
\smallskip

\paragraph{Acknowledgements.}
This work has been funded by Deutsche Forschungsgemeinschaft (DFG) - Project-ID 318763901 - SFB1294. GAG acknowledges funding from the Australian Research Council, grant DP220100931.

%%%%%%%%%%%%%%%%%%%%%%%%%%%%%%%%%

\bibliographystyle{abbrvnat}

%%%%%%%%%%%%%%%%%%%%%%%%%%%%%%%%%
%
\bibliography{references.bib}
%

%%%%%%%%%%%%%%%%%%%%%%%%%%%%

\end{document}